
\documentclass{article}

\usepackage{subfigure}
\usepackage{booktabs} 
\usepackage{subcaption}

\usepackage{cancel}
\usepackage{hyperref}



\usepackage[accepted]{icml2025}

\usepackage{amsmath}
\usepackage{amsthm}
\usepackage{thm-restate}
\usepackage{cancel}
\usepackage{graphicx} 
\usepackage[utf8]{inputenc} 
\usepackage[T1]{fontenc}    
\usepackage{hyperref}       
\usepackage{url}            
\usepackage{booktabs}       
\usepackage{amsfonts}       
\usepackage{nicefrac}       
\usepackage{microtype}      
\usepackage{relsize}
\usepackage{tikz}
\usepackage{wrapfig}
\usepackage{xcolor}         

\usepackage[capitalize,noabbrev]{cleveref}

\theoremstyle{plain}
\newtheorem{theorem}{Theorem}[section]

\newtheorem{lemma}[theorem]{Lemma}

\theoremstyle{definition}
\newtheorem{definition}[theorem]{Definition}

\theoremstyle{remark}

\newcommand{\SSigma}{\boldsymbol{\Sigma}}
\newcommand{\vtheta}{\boldsymbol{\theta}}

\newcommand{\PP}{\boldsymbol{P}}

\newcommand{\loss}{\color{black} \mathcal{L}}

\newcommand{\rv}{\mathrm}
\newcommand{\E}{\mathbb{E}}
\newcommand{\R}{\mathbb{R}}

\newcommand{\xx}{\boldsymbol{x}}

\newcommand{\zz}{\boldsymbol{z}}

\usepackage[textsize=tiny]{todonotes}

\icmltitlerunning{Slowing Learning by Erasing Simple Features}

\begin{document}

\twocolumn[
\icmltitle{Slowing Learning by Erasing Simple Features}



\icmlsetsymbol{equal}{*}

\begin{icmlauthorlist}
\icmlauthor{Lucia Quirke}{eai}
\icmlauthor{Nora Belrose}{eai}
\end{icmlauthorlist}

\icmlaffiliation{eai}{EleutherAI}

\icmlcorrespondingauthor{Lucia Quirke}{lucia@eleuther.ai}

\icmlkeywords{Machine Learning, ICML}

\vskip 0.3in
]



\printAffiliationsAndNotice{\icmlEqualContribution} 

\begin{abstract}
Prior work suggests that neural networks tend to learn low-order moments of the data distribution first, before moving on to higher-order correlations. In this work, we derive a novel closed-form concept erasure method, QLEACE, which surgically removes all quadratically available information about a concept from a representation. Through comparisons with linear erasure (LEACE) and two approximate forms of quadratic erasure, we explore whether networks can still learn when low-order statistics are removed from image classification datasets. We find that while LEACE consistently slows learning, quadratic erasure can exhibit both positive and negative effects on learning speed depending on the choice of dataset, model architecture, and erasure method.

Use of QLEACE consistently slows learning in feedforward architectures, but more sophisticated architectures learn to use injected higher order Shannon information about class labels. Its approximate variants avoid injecting information, but surprisingly act as data augmentation techniques on some datasets, enhancing learning speed compared to LEACE.


\end{abstract}

\section{Introduction}

The success of deep learning is due in large part to the good inductive biases baked into popular neural network architectures \cite{chiang2022loss, teney2024neural}. One of these biases is the \textbf{distributional simplicity bias (DSB)}, which states that neural networks learn to exploit the lower-order statistics of the input data first-- e.g. mean and (co)variance-- before learning to use its higher-order statistics, such as  (co)skewness or (co)kurtosis.

Recent work from \citet{belrose2024neural} provides evidence for the DSB by evaluating intermediate checkpoints of a network on maximum entropy synthetic datasets which match the low-order statistics training set. They find that networks learn to perform well on the max-ent synthetic datasets early in training, then lose this ability later.

In this work, we invert the experimental setup of \citet{belrose2024neural} by training networks on datasets whose low-order statistics have been made uninformative of the class label. To remove first-order information-- that is, making all classes have the same mean-- we use the LEAst-squares Concept Erasure (LEACE) method from \citet{belrose2023leace}, which provably distorts the data only as little as is necessary. For second-order information-- differences in covariance between classes-- we derive two novel concept erasure techniques which may be of independent interest: QLEACE, and an approximate variant ALF-QLEACE.

We measure the ``difficulty'' of learning on erased data using the prequential minimum description length (MDL) framework proposed by \citet{voita2020information}. Prequential MDL is equivalent to the area under the learning curve, where the x-axis indicates the size of the training dataset, and the y-axis is the cross-entropy loss computed on a validation set after training. Additionally, we report final losses achieved on training runs which use the entire dataset.

We find that LEACE is consistently effective in making learning more difficult, causing significant increases in MDL which are consistent across network architectures. Interestingly, making the network more expressive by increasing its width beyond roughly 500 or 1K neurons does not seem to reduce the MDL, either on the control set or on the LEACE'd dataset. Since we use the maximal update parametrization \citep{yang2021tuning} to ensure hyperparameter transfer across network widths, this suggests LEACE makes learning more difficult even for \emph{infinite width} feature-learning networks \citep{yang2020feature, vyas2024feature}.

By contrast, our quadratic erasure results are much more mixed. We found that there are multiple different ways of eliminating or reducing the amount of quadratic information present in a dataset, and each of these methods seem to yield different results. While QLEACE guarantees that all classes will have equal means and covariance matrices, it can potentially inject additional information about the class label into higher-order statistics. In some cases, this leads to a phenomenon we call \emph{backfiring} where a model learns to exploit these higher-order statistics after many epochs of training, ultimately achieving \emph{lower} loss than it does on the original data.

ALF-QLEACE is meant to address this problem: since it applies the same affine transformation to every datapoint independent of its class label, it cannot cause backfiring, although it leaves a fraction of the quadratically-available information intact. We also experiment with a gradient-based approach which directly optimizes the dataset to remove quadratic information, while penalizing the distance traveled from the original data. Surprisingly, these methods seem to serve as a form of \emph{data augmentation} for convolutional networks, leading to a lower MDL than LEACE does. We conclude that known quadratic concept erasure methods are unreliable and should be used with caution.


\section{Erasing Moments via Optimal Transport}

\begin{figure*}
    \centering
    \includegraphics[trim=0 0 0 0, clip, width=0.95\textwidth]{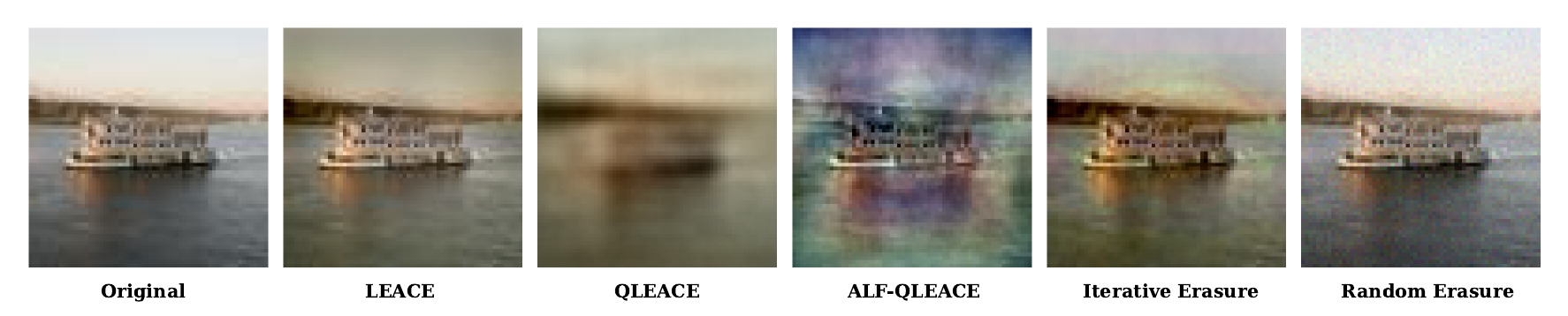}
    \caption{Ship from the CIFARNet training set edited with each eraser. LEACE and gradient-based quadratic erasure minimally affect intelligibility of the image, while QLEACE and ALF-QLEACE reduce intelligibility. The ALF-QLEACE intervention is rank $d - 15$. A random projection of equal rank is included for comparison.}
    \label{fig:eraser-comparison}
\end{figure*}

Consider a $k$-class classification task over jointly defined random vectors $\rv X$ (the input data) and $\rv Z$ (the one-hot labels), taking values in $\mathbb R^d$ and $\mathcal Z = \{(z_1, \ldots z_k) \in \mathbb \{0, 1\}^k\ \big|\ \sum_{j=1}^k z_j = 1\}$\footnote{We frequently use the integer $j \le k$ to refer to the element of $\mathcal Z$ which is $1$ at the $j^\text{th}$ index and $0$ elsewhere.} respectively, with $\E \|\rv X\| < \infty$ and each $\mathbb P(\rv Z = j) > 0$, and a predictor $\eta(\cdot; \vtheta): \mathbb R^d \to \mathbb R^k$, chosen from a function class $\mathcal V = \{\eta(\cdot; \vtheta)\ |\ \vtheta \in \Theta\}$ (presumed to contain all constant functions) so as to minimize the expectation $\E \big[ \loss(\eta(\rv X), \rv Z) \big]$ of some $\loss: \mathbb R^k \times \mathcal Z \to [0, \infty)$ in a class $\mathfrak L$ of loss functions.

\begin{definition}[Polynomial Predictor]\label{def:polynomial-predictor}
A degree $N$ \textbf{polynomial predictor} is a function $\eta : \R^d \rightarrow \R^k$, where the $j$\textsuperscript{th} component of $\eta(\xx)$ is a polynomial of degree $N$ in the components of $\xx$, and is interpreted as the likelihood that $\xx$ is a member of class $j$. In Einstein summation notation we have
\begin{equation}\label{eq:polynomial-predictor}
    \eta(\xx) = \mathbf b + \sum_{n = 1}^N \big ( \mathbf{A}^{(n)}_{i_1\ldots i_n} \xx_{i_1}\ldots \xx_{i_n} \big ),
\end{equation}
where $\mathbf b \in \R^d$ is a bias term, and each $\mathbf{A}^{(n)}$ is an order $n + 1$ coefficient tensor whose first $n$ axes are of size $d$, and whose final axis is of size $k$. For binary classification where $k = 1$, Eq.~\ref{eq:polynomial-predictor} reduces to the familiar quadratic form $\xx^T \mathbf{A} \xx + \mathbf{b}^T \xx + \mathbf{c}$ for $n = 2$, and a linear classifier $\mathbf{b}^T \xx + \mathbf{c}$ for $n = 1$.
\end{definition}


Once we allow the erasure function to depend on the class label, we can immediately draw a connection with the Monge formulation of the \textbf{optimal transport} problem between probability distributions. First consider the \textbf{information-theoretic} erasure task: given $k$ class-conditional distributions $\{ \mathbb{P}(\xx | \rv Z = \zz_1), \ldots, \mathbb{P}(\xx | \rv Z = \zz_k) \}$ and a cost function ${c:\mathcal{X}\times\mathcal{Z}\rightarrow\mathbb{R}}$, we seek a \textbf{barycenter} distribution $\overline{\mathbb{P}}(\xx)$ and a set of transport maps $\{ T_1(\xx), \ldots, T_k(\xx) \}$ minimizing expected cost while transporting each class to $\overline{\mathbb{P}}(\xx)$.
\begin{equation}
    C(\mathbb{P},\mathbb{Q})\stackrel{\text{def}}{=}\inf_{T_{\#}\mathbb{P}=\mathbb{Q}}\ \int_{\mathcal{X}} c\big(x,T(x)\big) d\mathbb{P}(x),
\label{ot-primal-form-monge}
\end{equation}
where the minimum is taken over measurable functions $T:\mathcal{X}\rightarrow\mathcal{Y}$ that map $\mathbb{P}$ to $\mathbb{Q}$. It's easy to see that an exact solution to this task would eliminate all Shannon information between $\rv X$ and $\rv Z$, preventing unrestricted nonlinear adversaries from predicting $\rv Z$ from $\rv X$ better than chance.\footnote{Strictly speaking, the Monge formulation of OT does not always have a solution: for example, it is impossible for any deterministic map to transport the Dirac delta distribution to a Gaussian. The Kantorovich formulation, which allows for stochastic transport maps, always has a solution, but in this work we assume the Monge problem is feasible on real-world datasets.}

\subsection{Extension to Polynomial Classifiers}
\label{sec:polynomials}

Below, we extend \citet[Theorem 3.1]{belrose2023leace} to general polynomial predictors.

\begin{restatable}{theorem}{sufficiency}\label{sufficiency}
    Suppose $\loss$ is convex in $\eta(\xx)$. Then if for each class $\zz \in \mathcal Z$ and each order $n \in 1\ldots N$, the tensor of class-conditional moments $\E\big[\rv X_{i_1}\ldots \rv X_{i_n} | \rv Z = \zz\big]$ is equal to the unconditional moment tensor $\E\big[\rv X_{i_1}\ldots \rv X_{i_n} \big]$, the trivially attainable loss cannot be improved upon.
\end{restatable}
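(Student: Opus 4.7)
The plan is to apply Jensen's inequality conditionally on the class label $\rv Z$, using the moment-matching hypothesis to collapse the conditional expectation $\E[\eta(\rv X) \mid \rv Z = \zz]$ down to a class-independent constant. This will lower-bound the expected loss of any degree-$N$ polynomial predictor by that of a specific constant predictor, which in turn is at least the trivially attainable loss (since $\mathcal V$ is assumed to contain all constants).

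Concretely, first fix an arbitrary $\eta$ of the form~\eqref{eq:polynomial-predictor} and compute, by linearity of expectation,
\begin{equation*}
    \E[\eta(\rv X) \mid \rv Z = \zz] = \mathbf b + \sum_{n=1}^{N} \mathbf A^{(n)}_{i_1 \ldots i_n}\, \E[\rv X_{i_1} \cdots \rv X_{i_n} \mid \rv Z = \zz].
\end{equation*}
The hypothesis says that each conditional moment tensor on the right equals its unconditional counterpart, so this collapses to the single class-independent vector
\begin{equation*}
    \mathbf c_\eta \;:=\; \mathbf b + \sum_{n=1}^{N} \mathbf A^{(n)}_{i_1 \ldots i_n}\, \E[\rv X_{i_1} \cdots \rv X_{i_n}].
\end{equation*}
This is the only place the hypothesis is used, and it is what makes the whole argument go through.

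Next, convexity of $\loss$ in its first argument together with the conditional Jensen inequality yields $\E[\loss(\eta(\rv X), \zz) \mid \rv Z = \zz] \ge \loss(\mathbf c_\eta, \zz)$ for each $\zz \in \mathcal Z$. Taking a further expectation over $\rv Z$ and then using that $\mathbf c_\eta$ defines a particular constant function in $\mathcal V$ gives
\begin{equation*}
    \E\!\bigl[\loss(\eta(\rv X), \rv Z)\bigr] \;\ge\; \E\!\bigl[\loss(\mathbf c_\eta, \rv Z)\bigr] \;\ge\; \inf_{\mathbf c \in \R^k}\, \E\!\bigl[\loss(\mathbf c, \rv Z)\bigr],
\end{equation*}
which is the trivially attainable loss. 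Since $\eta$ was arbitrary, no degree-$N$ polynomial predictor can improve upon it.

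The argument is essentially a one-line application of Jensen once the moment-matching cancellation is identified, so there is no serious obstacle. The mild bookkeeping worth flagging is (i) that the $N$-th moments of $\rv X$ must be finite so that $\E[\eta(\rv X) \mid \rv Z = \zz]$ is well defined (implicit in stating the hypothesis), and (ii) that Jensen is being used in its vector-valued form, since $\loss(\cdot, \zz): \R^k \to [0, \infty)$ acts on a vector; both are standard given the stated convexity and integrability assumptions.
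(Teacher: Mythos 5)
Your proof is correct and matches the paper's argument essentially step for step: conditional Jensen's inequality, followed by using linearity plus the moment-matching hypothesis to collapse $\E[\eta(\rv X)\mid \rv Z=\zz]$ to a class-independent constant, which is then compared against the trivially attainable loss. The only cosmetic difference is the order in which you present Jensen and the collapse; your remarks about finiteness of moments and vector-valued Jensen are reasonable bookkeeping that the paper leaves implicit.
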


\begin{proof}
    See Appendix~\ref{sufficiency-proof}.
\end{proof}

We also extend \citet[Theorem 3.2]{belrose2023leace} to the polynomial case:

\begin{restatable}{theorem}{necessity}\label{necessity}
    Suppose $\loss$ has bounded partial derivatives, which when off-category never vanish and do not depend on the category, i.e.
        ${\partial \loss(\eta, z_1)}/{\partial \eta_i} = {\partial \loss(\eta, z_2)}/{\partial \eta_i} \neq 0$
    for all categories $z_1, z_2 \neq i$. If $\E \big[\loss(\eta, \rv Z)\big]$ is minimized among degree $N$ polynomial predictors by the constant predictor with $\forall n \in 1\ldots N : \mathbf A^{(n)} = \mathbf{0}$, then for each order $n \in 1\ldots N$, the tensor of class-conditional moments $\E\big[\rv X_{i_1}\ldots \rv X_{i_n} | \rv Z = \zz\big]$ is equal to the unconditional moment tensor $\E\big[\rv X_{i_1}\ldots \rv X_{i_n} \big]$.
\end{restatable}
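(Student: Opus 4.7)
The strategy is to apply first-order optimality conditions at the minimizer $(\mathbf{b}^\star,\mathbf{0},\ldots,\mathbf{0})$, and exploit the category-independence hypothesis on $\partial\loss/\partial\eta$ to convert the FOC into simple linear equations involving the class-conditional moments.

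\emph{Step 1: Differentiate under the expectation.} Because $\loss$ has bounded partial derivatives and the parameter space is finite-dimensional, dominated convergence (with $\|\rv X\|^n$ as dominating witness for the $n$-th coefficient tensor) justifies differentiating $\E[\loss(\eta(\rv X),\rv Z)]$ under the expectation. Using $\partial\eta_{j'}/\partial A^{(n)}_{i_1\ldots i_n,j'} = \rv X_{i_1}\cdots \rv X_{i_n}$, the FOC for each entry of $\mathbf{A}^{(n)}$ evaluated at the hypothesised minimizer is
\begin{equation*}
\E\!\left[\frac{\partial\loss(\mathbf{b}^\star,\rv Z)}{\partial\eta_{j'}}\,\rv X_{i_1}\cdots \rv X_{i_n}\right]=0,
\end{equation*}
and the FOC for $b_{j'}$ gives the same identity with $\rv X_{i_1}\cdots \rv X_{i_n}$ replaced by $1$.

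\emph{Step 2: Split on the category.} Let $\alpha_{j'}$ denote the common value of $\partial\loss(\mathbf{b}^\star,z)/\partial\eta_{j'}$ for the off-category values $z\neq j'$ (well-defined and nonzero by hypothesis), and let $\beta_{j'}$ denote its value at $z=j'$. Partitioning $\mathcal Z$ into $\{j'\}$ and its complement, the two FOCs become
\begin{align*}
\alpha_{j'}\,\E[\rv X_{i_1}\cdots \rv X_{i_n}] &+ (\beta_{j'}-\alpha_{j'})\,\E\!\left[\rv X_{i_1}\cdots \rv X_{i_n}\,\mathbf{1}\{\rv Z=j'\}\right]=0,\\
\alpha_{j'} &+ (\beta_{j'}-\alpha_{j'})\,\mathbb P(\rv Z=j')=0.
\end{align*}
Note that $\beta_{j'}\neq\alpha_{j'}$, for otherwise the second equation forces $\alpha_{j'}=0$, contradicting the hypothesis.

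\emph{Step 3: Solve.} Solving the second equation gives $\mathbb P(\rv Z=j') = -\alpha_{j'}/(\beta_{j'}-\alpha_{j'})$; substituting into the first and dividing by the strictly positive quantity $\mathbb P(\rv Z=j')$ yields $\E[\rv X_{i_1}\cdots \rv X_{i_n}\mid \rv Z=j']=\E[\rv X_{i_1}\cdots \rv X_{i_n}]$ for every order $n\le N$, every multi-index and every class $j'$, as required.

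The argument is essentially a direct calculus exercise; the main subtlety is not computational but interpretive. I expect the chief obstacle to be bookkeeping: parsing the ``off-category'' hypothesis correctly (the partial may depend on $\eta$ and on the coordinate $i$, but not on which non-target class we condition on), and keeping the multi-index notation consistent across the FOCs for different $n$. Once that is in hand, the finite-dimensional differentiation is routine thanks to the bounded-derivative assumption.
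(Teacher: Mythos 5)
Your proof is correct and follows essentially the same route as the paper's: apply first-order optimality at the constant minimizer, use the FOC on the bias to cancel the on-category partial derivative term, and divide out the nonvanishing off-category partial. The only differences are cosmetic (you parametrize via $\alpha_{j'},\beta_{j'}$ and indicator functions where the paper works with $\mathbb P(\rv Z \neq j)$ and conditional expectations, and you solve the bias FOC for $\mathbb P(\rv Z=j')$ rather than for $\mathbb P(\rv Z=j)\,\partial\loss(\eta,j)/\partial\eta_j$), so this is the same argument with equivalent bookkeeping.
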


\begin{proof}
    See Appendix~\ref{necessity-proof}.
\end{proof}

As noted in \citet[Theorem 3.3]{belrose2023leace}, this theorem applies to the popular categorical cross entropy loss $\loss(\eta, z) = - \log \frac {\exp(\eta_z)} {\sum_{i=1}^k \exp(\eta_i)}$.


\subsection{Quadratic LEACE}
\label{sec:qleace}

The theorems of Sec.~\ref{sec:polynomials} imply that our erasure function must ensure the class-conditional mean and covariance equal the unconditional mean and covariance, but they do not specify what these unconditional moments should be. We are therefore free to select the target mean and covariance matrix $\mathbf{\Sigma}^*$ so as to minimize the expected edit magnitude $\E \| \rv X' - \rv X \|^2_2$. Using results from optimal transport theory, we derive the optimal unconditional mean and covariance matrix, as well as the optimal class-dependent quadratic concept erasure function.

\begin{lemma}[Gaussian Wasserstein Barycenter]\label{thm:gaussian-barycenter}
    Let $\nu_1, \ldots, \nu_k$ be Gaussian distributions with means $m_1, \ldots, m_k$ and non-singular covariance matrices $\mathbf{\Sigma}_1, \ldots, \mathbf{\Sigma}_k$, along with non-negative weights $\lambda_1, \ldots, \lambda_k$. Let $\mathcal{P}_2(\mathbb{R}^d)$ be the set of probability measures over $\mathbb{R}^d$ with finite second moment. Then
    \begin{equation}\label{eq:gaussian-objective}
        \mathcal{N}(\bar{\mathbf{m}}, \bar{\mathbf{\Sigma}}) = \mathop{\mathrm{argmin\:}}_{\mu \in \mathcal{P}_2(\mathbb{R}^d)} \sum_{i = 1}^k \lambda_i \mathcal{W}_2^2(\nu_i, \mu),
    \end{equation}
    where $\bar{\mathbf{m}} = \sum_{i = 1}^k \lambda_i \bar{\mathbf{m}}_i$ and $\bar{\mathbf{\Sigma}}$ is the unique positive definite $\mathbf{S}$ satisfying the equation
    \begin{equation}\label{eq:barycenter}
        \mathbf{S} = \sum_{i = 1}^k \lambda_i \big( \mathbf{S}^{1/2} \mathbf{\Sigma}_i \mathbf{S}^{1/2} \big)^{1/2}.
    \end{equation}
\end{lemma}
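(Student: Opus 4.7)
The plan is to reduce the barycenter problem to a matrix fixed-point equation via the Bures--Wasserstein closed form $\mathcal{W}_2^2(\mathcal{N}(m_1, \mathbf{\Sigma}_1), \mathcal{N}(m_2, \mathbf{\Sigma}_2)) = \|m_1 - m_2\|^2 + \operatorname{tr}(\mathbf{\Sigma}_1 + \mathbf{\Sigma}_2 - 2(\mathbf{\Sigma}_1^{1/2}\mathbf{\Sigma}_2\mathbf{\Sigma}_1^{1/2})^{1/2})$. First I would restrict the feasible set to Gaussians using the Gelbrich inequality: for any $\mu \in \mathcal{P}_2(\mathbb{R}^d)$ with mean $\mathbf{m}$ and covariance $\mathbf{S}$, one has $\mathcal{W}_2^2(\nu_i, \mu) \geq \|m_i - \mathbf{m}\|^2 + \operatorname{tr}(\mathbf{\Sigma}_i + \mathbf{S} - 2(\mathbf{S}^{1/2}\mathbf{\Sigma}_i\mathbf{S}^{1/2})^{1/2})$, with equality when $\mu = \mathcal{N}(\mathbf{m}, \mathbf{S})$. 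Weighting by $\lambda_i$ and summing shows the infimum over $\mathcal{P}_2(\mathbb{R}^d)$ equals the infimum over Gaussians, so we may replace $\mu$ by $\mathcal{N}(\mathbf{m}, \mathbf{S})$ at the outset.

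Among Gaussians, the objective separates into a mean term and a covariance term. The mean term $\sum_i \lambda_i \|m_i - \mathbf{m}\|^2$ is minimized by $\bar{\mathbf{m}} = \sum_i \lambda_i m_i$ (assuming the $\lambda_i$ are normalized; otherwise rescale). The covariance term is
\begin{equation}
F(\mathbf{S}) := \operatorname{tr}(\mathbf{S}) + \sum_{i=1}^k \lambda_i \operatorname{tr}(\mathbf{\Sigma}_i) - 2\sum_{i=1}^k \lambda_i \operatorname{tr}\bigl((\mathbf{S}^{1/2} \mathbf{\Sigma}_i \mathbf{S}^{1/2})^{1/2}\bigr),
\end{equation}
which I would analyze by invoking the optimal transport map between centered Gaussians, $T_i = \mathbf{S}^{-1/2}(\mathbf{S}^{1/2}\mathbf{\Sigma}_i\mathbf{S}^{1/2})^{1/2}\mathbf{S}^{-1/2}$, for which $\nabla F(\mathbf{S}) = \mathbf{I} - \sum_i \lambda_i T_i$. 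The stationarity condition $\sum_i \lambda_i T_i = \mathbf{I}$, after pre- and post-multiplying by $\mathbf{S}^{1/2}$, becomes exactly $\mathbf{S} = \sum_i \lambda_i (\mathbf{S}^{1/2}\mathbf{\Sigma}_i \mathbf{S}^{1/2})^{1/2}$, which is (\ref{eq:barycenter}).

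For existence of a positive definite solution, $F$ is continuous and coercive on the PSD cone (the $\operatorname{tr}(\mathbf{S})$ term dominates the $O(\operatorname{tr}(\mathbf{S})^{1/2})$ cross-term via Cauchy--Schwarz), so it attains its minimum, and nonsingularity of each $\mathbf{\Sigma}_i$ forces the minimum into the interior $\mathbf{S} \succ 0$. Uniqueness follows from strict geodesic convexity of $\mathbf{S} \mapsto \mathcal{W}_2^2(\mathcal{N}(0, \mathbf{S}), \nu_i)$ on the Bures--Wasserstein manifold, hence of $F$ whenever any $\lambda_i > 0$. The main obstacle is justifying $\nabla F(\mathbf{S}) = \mathbf{I} - \sum_i \lambda_i T_i$ rigorously: direct matrix calculus on $\operatorname{tr}((\mathbf{S}^{1/2}\mathbf{\Sigma}_i\mathbf{S}^{1/2})^{1/2})$ is unpleasant, so I would instead differentiate the Brenier representation $\mathcal{W}_2^2(\mathcal{N}(0,\mathbf{S}), \nu_i) = \int \|T_i(x) - x\|^2 \, d\mathcal{N}(0,\mathbf{S})(x)$ via an envelope argument (the first variation in $T_i$ vanishes at the optimum), leaving only the dependence on $\mathbf{S}$ through the base measure and producing $\mathbf{I} - \sum_i \lambda_i T_i$ cleanly.
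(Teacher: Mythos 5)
The paper itself offers no proof of this lemma---it cites R\"uschendorf (2002) and stops. So there is no in-paper argument to compare against; your proposal supplies the standard literature argument (Gelbrich/Bures--Wasserstein reduction to Gaussians, first-order condition, fixed-point characterization in the style of Agueh--Carlier and \'Alvarez-Esteban et al.), and the overall plan is sound.

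A few items need tightening. The Gelbrich step is correct: since each $\nu_i$ is already Gaussian, $\mathcal{W}_2^2(\nu_i,\mu)\ge\mathcal{W}_2^2(\nu_i,\mathcal{N}(\mathbf{m},\mathbf{S}))$ for any $\mu$ with moments $(\mathbf{m},\mathbf{S})$, with equality at the Gaussian, so the infimum over $\mathcal{P}_2(\mathbb{R}^d)$ is attained on the Gaussian family. You also correctly flag that both the mean formula and the fixed-point equation implicitly assume $\sum_i\lambda_i=1$; the lemma's statement only makes sense under that normalization. The two places you mark as delicate are exactly where the real work lies, and as written they are still gaps. For the gradient $\nabla F(\mathbf{S})=\mathbf{I}-\sum_i\lambda_i T_i$, the envelope heuristic is the right intuition, but to make it rigorous one typically either differentiates the closed-form trace term directly (spectral-function calculus makes $\mathbf{S}\mapsto\operatorname{tr}\bigl((\mathbf{S}^{1/2}\mathbf{\Sigma}_i\mathbf{S}^{1/2})^{1/2}\bigr)$ differentiable on the PD cone) or invokes a Danskin-type argument on the Kantorovich dual; the version you sketch, differentiating the Brenier integral while ``holding $T_i$ fixed,'' needs one of these to be more than heuristic. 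For uniqueness, ``strict geodesic convexity'' of each term $\mathbf{S}\mapsto\mathcal{W}_2^2(\mathcal{N}(0,\mathbf{S}),\nu_i)$ on the Bures--Wasserstein manifold is not the standard justification and is not obviously true as stated; the clean route is to invoke Agueh--Carlier's general result that the 2-Wasserstein barycenter is unique whenever at least one $\nu_i$ is absolutely continuous (which holds here since each $\mathbf{\Sigma}_i\succ 0$), or to cite \'Alvarez-Esteban et al.\ for uniqueness of the positive-definite fixed point directly. Your coercivity bound is correct and gives existence; to push the minimizer into the interior $\mathbf{S}\succ 0$ you still need a short argument, e.g.\ that the fixed-point equation cannot hold on the boundary when some $\mathbf{\Sigma}_i$ is nonsingular. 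With these repairs the proof is complete.
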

\begin{proof}
    See \citet[pg. 246]{ruschendorf2002n}.
\end{proof}

\begin{lemma}[2-Wasserstein Lower Bound]\label{thm:lower-bound}
    Let $\mathcal{P}_2(\mathbb{R}^d)$ be the set of probability measures over $\mathbb{R}^d$ with finite second moment. Let $P, Q \in \mathcal{P}_2(\mathbb{R}^d)$ have means $\mathbf{m}_P, \mathbf{m}_Q$ and covariance matrices $\mathbf{\Sigma}_P, \mathbf{\Sigma}_Q$, with $\mathbf{\Sigma}_P$ full rank. Then
    \begin{align}
        \mathcal{W}_2^2(P, Q) \geq \mathcal{W}_2^2(\mathcal{N}(\mathbf{m}_P, \mathbf{\Sigma}_P), \mathcal{N}(\mathbf{m}_Q, \mathbf{\Sigma}_Q)),
    \end{align}
    with equality if and only if the map $T(\xx) = \mathbf{A}(\xx - \mathbf{m}_P) + \mathbf{m}_Q$ transports $P$ to $Q$, where
    \begin{equation}\label{eq:}
        \mathbf{A} = \mathbf{\Sigma}_P^{-1/2} \big ( \mathbf{\Sigma}_P^{1/2} \mathbf{\Sigma}_Q \mathbf{\Sigma}_P^{1/2} \big )^{1/2} \mathbf{\Sigma}_P^{-1/2}.
    \end{equation}
\end{lemma}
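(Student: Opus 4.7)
The plan is to reduce both sides of the inequality to an expectation depending only on the first two moments of the coupling, and then control a single matrix trace by a classical matrix-analytic bound. First I would note that for any coupling $\pi \in \Pi(P, Q)$ with $(\rv X, \rv Y) \sim \pi$, expanding the squared norm gives
\[
\E_\pi \|\rv X - \rv Y\|^2 = \|\mathbf{m}_P - \mathbf{m}_Q\|^2 + \mathrm{tr}(\mathbf{\Sigma}_P) + \mathrm{tr}(\mathbf{\Sigma}_Q) - 2\,\mathrm{tr}(\mathbf{K}_\pi),
\]
where $\mathbf{K}_\pi = \E_\pi[(\rv X - \mathbf{m}_P)(\rv Y - \mathbf{m}_Q)^T]$ is the cross-covariance. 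Since the right-hand side depends on $\pi$ only through its first two moments, the identity applies verbatim to couplings of the Gaussians $\mathcal N(\mathbf m_P, \mathbf \Sigma_P)$ and $\mathcal N(\mathbf m_Q, \mathbf \Sigma_Q)$. Proving the inequality therefore reduces to a uniform upper bound on $\mathrm{tr}(\mathbf K_\pi)$ that depends only on marginal covariances.

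Next I would bound $\mathrm{tr}(\mathbf K)$ by matrix analysis. Positive semidefiniteness of the joint covariance block gives the Schur-complement condition $\mathbf{\Sigma}_Q \succeq \mathbf K^T \mathbf{\Sigma}_P^{-1} \mathbf K$. Setting $\mathbf J := \mathbf{\Sigma}_P^{-1/2} \mathbf K \mathbf{\Sigma}_P^{1/2}$ yields $\mathbf J^T \mathbf J \preceq \mathbf{\Sigma}_P^{1/2} \mathbf{\Sigma}_Q \mathbf{\Sigma}_P^{1/2}$ and $\mathrm{tr}(\mathbf K) = \mathrm{tr}(\mathbf J)$ by cyclicity. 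Chaining $\mathrm{tr}(\mathbf J) \le \mathrm{tr}\big((\mathbf J^T \mathbf J)^{1/2}\big)$ (diagonal entries are dominated by singular values) with operator monotonicity of the matrix square root produces the bound $\mathrm{tr}(\mathbf K) \le \mathrm{tr}\big((\mathbf{\Sigma}_P^{1/2} \mathbf{\Sigma}_Q \mathbf{\Sigma}_P^{1/2})^{1/2}\big)$. A short calculation using the identity $\mathbf A \mathbf{\Sigma}_P \mathbf A = \mathbf{\Sigma}_Q$ (immediate from the definition of $\mathbf A$) then shows this bound is saturated by $T$: the coupling $(\rv X, T(\rv X))$ with $\rv X \sim P$ has cross-covariance $\mathbf{\Sigma}_P \mathbf A$, whose trace equals $\mathrm{tr}\big((\mathbf{\Sigma}_P^{1/2} \mathbf{\Sigma}_Q \mathbf{\Sigma}_P^{1/2})^{1/2}\big)$. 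Applying the same reasoning to Gaussian marginals (where $T$ is affine and hence pushes $\mathcal N(\mathbf m_P, \mathbf \Sigma_P)$ to $\mathcal N(\mathbf m_Q, \mathbf \Sigma_Q)$) yields a closed form for $\mathcal W_2^2$ between the Gaussians, from which the stated inequality follows.

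Finally, for the equality case, the forward direction is immediate: if $T_\# P = Q$ then $(\rv X, T(\rv X)) \in \Pi(P, Q)$ and its cost equals the lower bound by the trace computation above. For the converse, suppose equality holds and let $\pi^\star$ be any optimizer, which exists since $P, Q \in \mathcal P_2(\R^d)$. Writing $\rv Y = T(\rv X) + \rv R$, I would check directly that $\E[\rv R] = 0$ and $\E[(\rv X - \mathbf m_P)\rv R^T] = 0$, using the fact that $\pi^\star$ saturates the trace bound and so has cross-covariance $\mathbf{\Sigma}_P \mathbf A$ exactly. Expanding then gives $\E\|\rv X - \rv Y\|^2 = \E\|\rv X - T(\rv X)\|^2 + \E\|\rv R\|^2$; since the left side equals the lower bound by optimality and the first term on the right equals it by the direct calculation (which needs only correct first and second moments of $P$), we must have $\rv R = 0$ almost surely, hence $T_\# P = Q$. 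The main obstacle is the matrix step: chaining singular-value majorization with operator monotonicity of the square root is the one non-routine ingredient, and the same bound must come with a sharp characterization of its maximizer so that the converse equality argument works cleanly.
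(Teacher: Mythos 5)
The paper does not give its own proof of Lemma~\ref{thm:lower-bound}; it simply cites \citet{cuesta1996lower}. So there is no in-paper argument to compare against. Your reconstruction is the classical one that underlies that reference (and Gelbrich's closely related 1990 result): expand the transport cost into a moment identity that depends on the coupling only through the cross-covariance $\mathbf K_\pi$, then bound $\mathrm{tr}(\mathbf K_\pi)$ by the Dowson--Landau quantity $\mathrm{tr}\big((\mathbf\Sigma_P^{1/2}\mathbf\Sigma_Q\mathbf\Sigma_P^{1/2})^{1/2}\big)$ via a Schur-complement constraint and operator monotonicity of the matrix square root, and finally verify that the affine map $T$ saturates the bound, with the Gaussian case supplying the identification of the right-hand side as an actual $\mathcal W_2^2$. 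This chain is correct, and the computations you cite ($\mathbf A\mathbf\Sigma_P\mathbf A = \mathbf\Sigma_Q$, $\mathrm{tr}(\mathbf\Sigma_P\mathbf A) = \mathrm{tr}\big((\mathbf\Sigma_P^{1/2}\mathbf\Sigma_Q\mathbf\Sigma_P^{1/2})^{1/2}\big)$) all check out.

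Two spots deserve a little more care if you were to flesh this into a full proof, though neither is a genuine gap. First, in passing from ``equality in the trace bound'' to ``the optimizer has cross-covariance exactly $\mathbf\Sigma_P\mathbf A$,'' you need the sharp characterization of when $\mathrm{tr}(\mathbf J) = \mathrm{tr}\big((\mathbf J^\top\mathbf J)^{1/2}\big)$ and $\mathrm{tr}\big((\mathbf J^\top\mathbf J)^{1/2}\big) = \mathrm{tr}\big((\mathbf\Sigma_P^{1/2}\mathbf\Sigma_Q\mathbf\Sigma_P^{1/2})^{1/2}\big)$: the first forces $\mathbf J$ to be symmetric positive semidefinite, and the second then forces $\mathbf J = (\mathbf\Sigma_P^{1/2}\mathbf\Sigma_Q\mathbf\Sigma_P^{1/2})^{1/2}$ because a PSD matrix dominated by another PSD matrix with equal trace must equal it. You flag this as the non-routine ingredient, correctly; just make sure the argument nails down uniqueness of the maximizing $\mathbf K$, since the converse of the equality claim hinges on it. Second, the residual decomposition $\E\|\rv X-\rv Y\|^2 = \E\|\rv X-T(\rv X)\|^2 + \E\|\rv R\|^2$ uses that $\rv X - T(\rv X)$ is an affine function of $\rv X - \mathbf m_P$, so that $\E[\rv R]=0$ and $\E[(\rv X-\mathbf m_P)\rv R^\top]=0$ together kill the cross term; it is worth writing this out explicitly so the orthogonality you invoke is visibly the one you proved.

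Overall: correct, and faithful in spirit to the cited reference, with the caveat that the two equality-case details above should be made explicit before calling it a complete proof.
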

\begin{proof}
    See \citet{cuesta1996lower}.
\end{proof}

\begin{figure*}[t]
    \centering
    \includegraphics[width=0.9\textwidth]{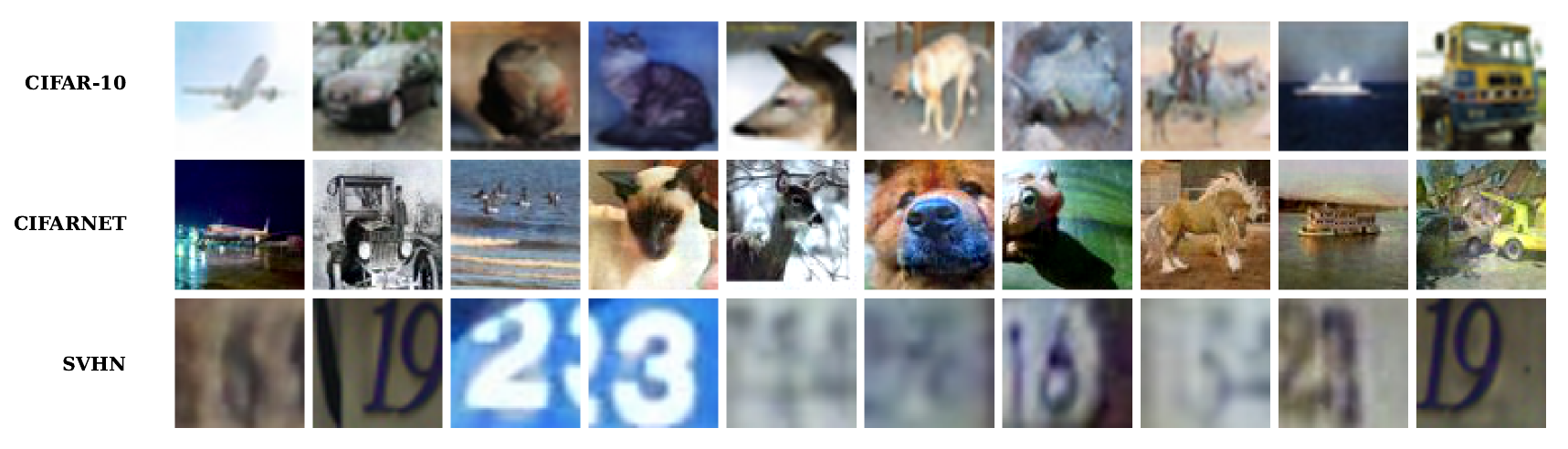}
    \caption{Images can be easily identified after gradient-based erasure. Non-cherrypicked examples from CIFAR-10, CIFARNet, and SVHN.}
    \label{fig:iterative}
\end{figure*}

\begin{theorem}[Quadratic LEACE]\label{thm:qleace}
Let $\nu_1, \ldots, \nu_k$ be distributions on $\R^d$ with means $\mathbf{m}_1, \ldots, \mathbf{m}_k$, full rank covariance matrices $\mathbf{\Sigma}_1, \ldots, \mathbf{\Sigma}_k$, and weights $\lambda_1, \ldots, \lambda_k \geq 0$. Let $\mathcal{P}(\R^d; \mathbf{m}, \mathbf \Sigma)$ be the set of distributions on $\R^d$ with mean $\mathbf{m}$ and covariance matrix $\mathbf \Sigma$. Then the objective
\begin{equation}\label{eq:recoloring-objective}
    \mathop{\mathrm{argmin\:}}_{(\mathbf{m}, \mathbf{\Sigma}) \in \R^d \times \mathcal{S}_{++}^d} \sum_{i = 1}^k \lambda_i \min_{\mu_i \in \mathcal{P}(\mathbb{R}^d; \mathbf{m}, \mathbf \Sigma)} \mathcal{W}_2^2(\nu_i, \mu_i)
\end{equation}
is minimized by the mean and covariance matrix of the barycenter of $\mathcal{N}(\mathbf{m}_1, \mathbf{\Sigma}_1), \ldots, \mathcal{N}(\mathbf{m}_k, \mathbf{\Sigma}_k)$, as defined in Lemma~\ref{thm:gaussian-barycenter}.

Further, for each $i$, the optimal transport map from $\nu_i$ to the optimal $\mu_i$ is the optimal transport map from $\mathcal{N}(\mathbf{m}_i, \mathbf{\Sigma}_i)$ to $\mathcal{N}(\bar{\mathbf{m}}, \bar{\mathbf{\Sigma})}$, as defined in Lemma~\ref{thm:lower-bound}.
\end{theorem}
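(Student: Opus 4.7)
The plan is to decouple the joint minimization over $(\mathbf m, \mathbf \Sigma)$ and the $\mu_i$ by solving the inner problem in closed form first, so that the outer problem reduces to a Gaussian barycenter computation to which Lemma~\ref{thm:gaussian-barycenter} directly applies. Concretely, for any fixed admissible target moments $(\mathbf m, \mathbf \Sigma) \in \R^d \times \mathcal{S}_{++}^d$, I would show that
\[
\min_{\mu_i \in \mathcal P(\R^d;\, \mathbf m, \mathbf \Sigma)} \mathcal W_2^2(\nu_i, \mu_i) \;=\; \mathcal W_2^2\big(\mathcal N(\mathbf m_i, \mathbf \Sigma_i),\, \mathcal N(\mathbf m, \mathbf \Sigma)\big),
\]
with the minimum attained by the pushforward of $\nu_i$ under the affine map $T_i(\xx) = \mathbf A_i(\xx - \mathbf m_i) + \mathbf m$, where $\mathbf A_i = \mathbf \Sigma_i^{-1/2}\bigl(\mathbf \Sigma_i^{1/2} \mathbf \Sigma \,\mathbf \Sigma_i^{1/2}\bigr)^{1/2} \mathbf \Sigma_i^{-1/2}$ is the Bures map from Lemma~\ref{thm:lower-bound}.

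The first step of the inner argument is the lower bound: Lemma~\ref{thm:lower-bound} says that for any $\mu_i$ with mean $\mathbf m$ and covariance $\mathbf \Sigma$, $\mathcal W_2^2(\nu_i, \mu_i)$ is at least the Gaussian 2-Wasserstein distance between $\mathcal N(\mathbf m_i, \mathbf \Sigma_i)$ and $\mathcal N(\mathbf m, \mathbf \Sigma)$, using that $\mathbf \Sigma_i$ is full rank by hypothesis. The second step is to verify that the pushforward $\mu_i := (T_i)_\# \nu_i$ is feasible and saturates the bound. Feasibility follows from linearity of expectation, which gives mean $\mathbf m$, and from the Bures identity $\mathbf A_i \mathbf \Sigma_i \mathbf A_i = \mathbf \Sigma$ (using symmetry of $\mathbf A_i$), which gives covariance $\mathbf \Sigma$. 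Saturation is exactly the equality clause of Lemma~\ref{thm:lower-bound}, since our $\mu_i$ is by construction the image of $\nu_i$ under that map.

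With the inner problem solved, the outer objective reduces to
\[
\min_{(\mathbf m, \mathbf \Sigma)} \sum_{i=1}^k \lambda_i\, \mathcal W_2^2\big(\mathcal N(\mathbf m_i, \mathbf \Sigma_i),\, \mathcal N(\mathbf m, \mathbf \Sigma)\big).
\]
Lemma~\ref{thm:gaussian-barycenter} asserts that the unrestricted minimizer of $\sum_i \lambda_i \mathcal W_2^2(\mathcal N(\mathbf m_i, \mathbf \Sigma_i), \mu)$ over all $\mu \in \mathcal P_2(\R^d)$ is the Gaussian $\mathcal N(\bar{\mathbf m}, \bar{\mathbf \Sigma})$ determined by Eq.~\ref{eq:barycenter}. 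Since this Gaussian belongs to the subfamily $\{\mathcal N(\mathbf m, \mathbf \Sigma) : (\mathbf m, \mathbf \Sigma) \in \R^d \times \mathcal{S}_{++}^d\}$ over which we are optimizing, the restricted minimum coincides with the unrestricted one and is attained at $(\mathbf m, \mathbf \Sigma) = (\bar{\mathbf m}, \bar{\mathbf \Sigma})$. Plugging back into the inner formula yields the claimed optimal transport map.

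The main subtlety is the inner step: one might worry that restricting to distributions with specified mean and covariance (rather than to Gaussians) could strictly raise the $\mathcal W_2^2$ cost relative to the Gaussian-to-Gaussian benchmark. The point that makes everything work is that the Gaussian bound of Lemma~\ref{thm:lower-bound} depends on $\nu_i$ only through its first two moments, and it is achieved by an affine map that preserves whatever non-Gaussian structure $\nu_i$ already has while realigning its moments. This is precisely why the target $\bar{\mathbf \Sigma}$ is the \emph{Gaussian} barycenter covariance even though the $\nu_i$ themselves are arbitrary distributions with finite second moment.
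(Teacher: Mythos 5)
Your argument is correct and follows the same structure as the paper's proof: solve the inner problem via Lemma~\ref{thm:lower-bound} (lower bound plus achievability by the Bures affine pushforward), then recognize the resulting outer problem as the Gaussian barycenter objective restricted to the parametric Gaussian family, which is lossless because the barycenter of Gaussians is itself Gaussian. The only difference is that you spell out the feasibility check $\mathbf A_i \mathbf\Sigma_i \mathbf A_i = \mathbf\Sigma$ explicitly, which the paper leaves implicit; otherwise the two proofs coincide.
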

\begin{proof}
We first consider the inner optimization problem for any fixed $(\mathbf{m}, \mathbf{\Sigma})$ chosen for the outer optimization problem.

Our only constraint for each $u_i$ is that $u_i \in \mathcal{P}(\mathbb{R}^d; \mathbf{m}, \mathbf \Sigma)$. Thus $\mu_i \sim \mathbf{A}_i(\nu_i - \mathbf{m}_i) + \mathbf{m}$ is feasible, with $\mathbf{A}_i = \mathbf{\Sigma}_i^{-1/2} \big ( \mathbf{\Sigma}_i^{1/2} \mathbf{\Sigma} \mathbf{\Sigma}_i^{1/2} \big )^{1/2} \mathbf{\Sigma}_i^{-1/2}$. Further, by Lemma~\ref{thm:lower-bound}, this choice achieves the minimum possible cost in the feasible set $\mathcal{P}(\mathbb{R}^d; \mathbf{m}, \mathbf \Sigma)$, equal to $\mathcal{W}_2^2(\mathcal{N}(\mathbf{m}_i, \mathbf{\Sigma}_i), \mathcal{N}(\mathbf{m}, \mathbf{\Sigma}))$.

We draw the following two conclusions.

First, for the optimal choice of $(\mathbf{m}, \mathbf{\Sigma})$, defined as $(\bar{\mathbf{m}}, \bar{\mathbf{\Sigma}})$, the optimal transport map for each $i$ is
\begin{equation}\label{eq:optimal-T-i}
T_i(\mathbf{x}) = \mathbf{\Sigma}_i^{-1/2} \big ( \mathbf{\Sigma}_i^{1/2} \bar{\mathbf{\Sigma}} \mathbf{\Sigma}_i^{1/2} \big )^{1/2} \mathbf{\Sigma}_i^{-1/2}(\mathbf{x} - \mathbf{m}_i) + \bar{\mathbf{m}}
\end{equation}

Second, we can rewrite our objective as 
\begin{equation}\label{eq:rewritten-recoloring}
    \mathop{\mathrm{argmin\:}}_{(\mathbf{m}, \mathbf{\Sigma}) \in \R^d \times \mathcal{S}_{++}^d} \sum_{i = 1}^k \lambda_i \mathcal{W}_2^2(\mathcal{N}(\mathbf{m}_i, \mathbf{\Sigma}_i), \mathcal{N}(\mathbf{m}, \mathbf{\Sigma})).
\end{equation}

This is identical to the Gaussian barycenter objective from Lemma~\ref{thm:gaussian-barycenter}, except that we have restricted the feasible set to Gaussian distributions. Since the barycenter of Gaussian distributions is always Gaussian, this restriction makes no difference to the solution. Therefore the barycenter of $\mathcal{N}(\mathbf{m}_1, \mathbf{\Sigma}_1), \ldots, \mathcal{N}(\mathbf{m}_k, \mathbf{\Sigma}_k)$ will have mean and covariance $(\bar{\mathbf{m}}, \bar{\mathbf{\Sigma}})$ that minimize Eq.~\ref{eq:rewritten-recoloring}, thus completing the proof.
\end{proof}

To solve for the unconditional covariance matrix, we use the fixed-point algorithm derived in \citet{alvarez2016fixed}. Then the optimal transport map can be computed in closed form as described above.

\begin{figure*}[t]
    \centering
    \includegraphics[width=0.8\textwidth]{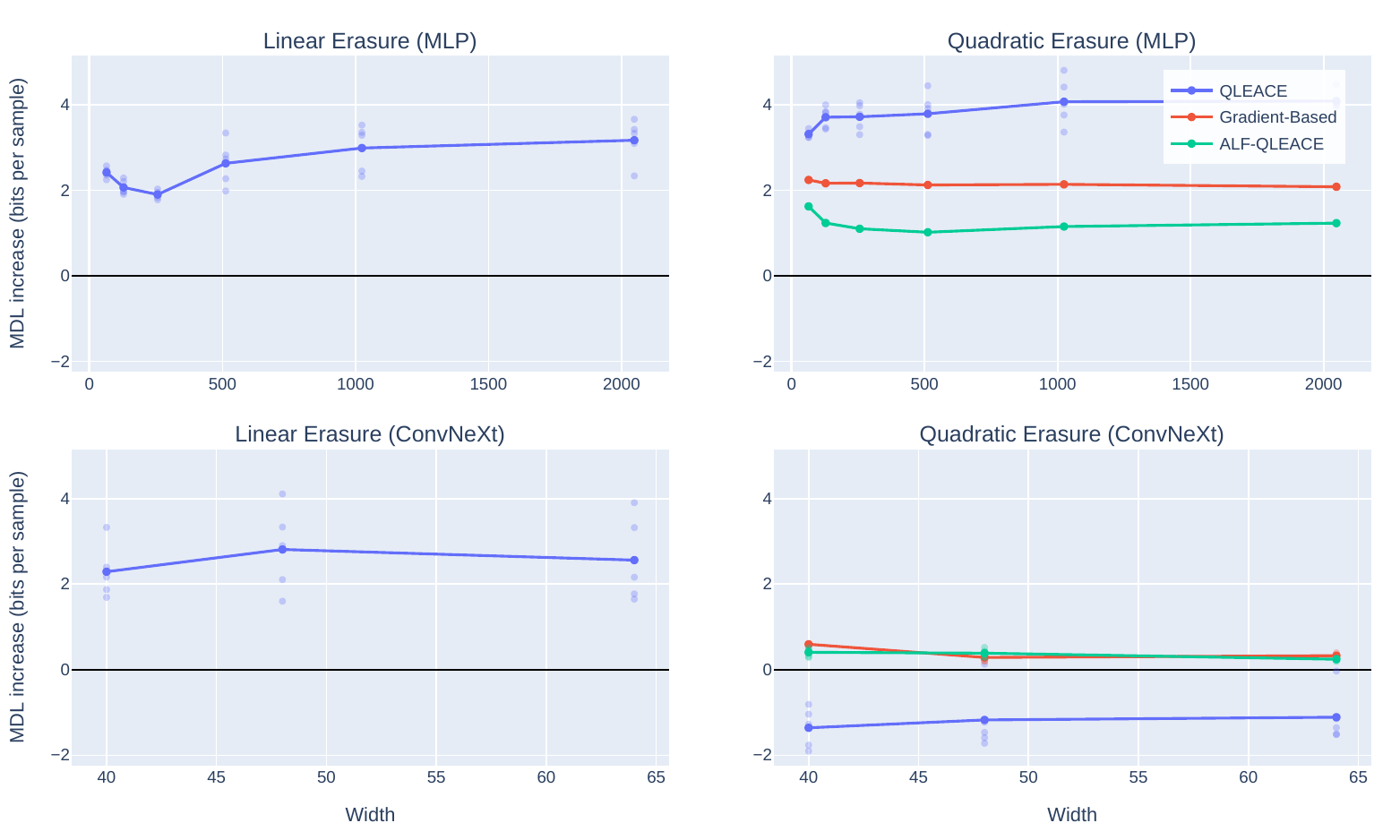}
    \caption{Increase in MDL from the erasure of the CIFAR-10 dataset over 5 random seeds for ReLU MLPs and ConvNeXt V2s of various widths. While LEACE slows learning similarly in both architectures, ConvNeXts are less affected by quadratic erasure and exhibit a backfiring effect on data modified with QLEACE, resulting in improved performance relative to unerased data.}
    \label{fig:cifar10_mdl_quadratic}
\end{figure*}

\subsection{Approximate Label-Free Quadratic LEACE}

While naïve quadratic LEACE removes quadratically available information, it also potentially injects Shannon information about the class label. Consider a dataset where each sample is a $d$-dimensional data point from one of $k$ classes. The data naturally lies within a bounded hypercube $[0, 1]^d$ or $[0, 255]^d$ defined by the valid range of each dimension. By applying an erasure function specific to the class label of each sample, QLEACE effectively transforms the hypercube into $k$ characteristic hyperparallelipipeds. Neural networks of sufficient expressivity may use this information to classify data.

Approximate Label-Free QLEACE removes some quadratically available information without injecting label information by applying the same transformation to each data point. The transformation is composed of a LEACE transformation to remove linearly available information, and an ALF-QLEACE transformation to remove some quadratically available information information.

Let $\SSigma_k$ denote the class-conditional covariance matrix for class $k$, and let $\bar{\SSigma}$ be the average of all class-conditional covariance matrices. Perfect quadratic guardedness would imply that
\begin{equation}\label{eq:perfect-guardedness}
    \SSigma_k - \bar{\SSigma} = \mathbf{0}_{d \times d}, \quad \forall k \in K
\end{equation}
This is trivially satisfied if $\PP$ is the zero matrix, but this would remove \emph{all} information contained in the representation. We need a higher-rank transformation.

The rank $d - k$ ALF-QLEACE transformation is a projection matrix $\PP$ that satisfies
\begin{align}\label{eq:ols-problem}
\mathop{\mathrm{argmin}}_{\substack{\PP}} \max_{k \sim K} \| \PP \SSigma_k \PP^T - \PP \bar{\SSigma} \PP^T \|^2_2
\end{align}
The \href{https://en.wikipedia.org/wiki/Min-max_theorem}{min-max theorem} implies that the rank $d - r$ orthogonal projection of a matrix with respect to both Frobenius and spectral norms is given by truncating its singular value decomposition to the $n$ largest singular values. 

Therefore, the rank 1 projection matrix that best approximates the difference matrix for a class $D_k = \Sigma_k - \bar{\SSigma}$ is given by the largest singular value of its singular value decomposition, and the optimal rank $d - 1$ projection matrix $P$ which minimizes that $D_k$ projects onto the orthogonal complement of the direction that contributes most to this maximum singular value - specifically, either singular vector corresponding to the largest singular value of $D_k$ for the class $k$ that maximizes the norm.

The rank $d-1$ projection preserves as much of the original data structure as possible while still effectively removing the strongest quadratic correlations. The projection does not increase variance in any direction and is therefore guaranteed to not increase the norm for another class. 

The efficacy of the projection may be improved by iteratively adding up to $d$ rank 1 erasures, each calculated on the updated class and global covariances matrices $P \SSigma_k P$ and $P \bar{\SSigma} P$.


\subsection{Gradient-based Moment Erasure}

We also experiment with a much more ``direct'' approach to erasing low-order class information from a dataset, using gradient-based optimization. Specifically, we initialize our synthetic dataset using the original data, and use L-BFGS \cite{liu1989limited} to optimize a loss function containing three terms: the average squared distance between each class-conditional mean and the unconditional mean, the average squared distance between each class-conditional covariance matrix and the unconditional covariance matrix, and the squared distance between the original images and their modified counterparts. We tune the weights of the three loss terms to ensure that the synthetic dataset has almost no quadratically-available information about the class label, while keeping fairly close to the original images. We also re-parameterize the images using the sigmoid function to ensure pixel values stay within the admissible range $[0, 255]$.

This approach has the benefit of not causing backfiring, and being mathematically straightforward. Unlike the other methods, however, it does not produce an ``eraser'' transformation that can be re-used on new datapoints.


\section{Experiments}

We investigate the effects of erasing simple features from three popular image classification datasets: CIFAR-10 \citep{krizhevsky2009learning}, CIFARNet \citep{belrose2024neural}, and Street View House Numbers (SVHN) \citep{netzer2011reading}. For data augmentation we apply random crops and horizontal flips, both linear transformations which cannot recover erased information. For each dataset we examine the effects on learning of LEACE, quadratic erasure, and two approximate variants of quadratic erasure.

We use the maximal update parametrization ($\mu$P) to transfer the optimal hyperparameters from narrower neural networks to wider ones \citep{yang2021tuning}. When scaling our MLPs to different depths, we adapt the experimental learning rate scaling exponent of $3/2$ proposed in \citet{jelassi2023depthdependencemuplearning}, and use a more conservative scaling exponent of $1/2$ for all other architectures,
$$lr=lr_{base} \sqrt{\frac{L_{base}}{L}}$$
where $L$ is the number of layers.

We also use the schedule-free AdamW optimizer proposed by \citet{defazio2024road}, which allows us to train our networks for as long as is necessary to achieve convergence without fixing a learning rate decay schedule in advance. 

To investigate how dataset choice mediates the effects of feature erasure we train MLPs and parameter-matched LeNets on the CIFAR-10, CIFARNet, and SVHN datasets. For each combination of dataset, eraser, and architecture we track training dynamics via cross-entropy loss curves for various widths and depths and compute the prequential MDL, averaging results across five random seeds.

To investigate the effects of erasure on state-of-the-art image classification architectures, we train Swin Transformer V2 \citep{liu2022swin} and ConvNeXt V2 \citep{woo2023convnext} models on the CIFAR-10 dataset. We scale the models width- and depth-wise and collect average prequential MDLs (scaling details are available in Appendix~\ref{appendix:scaling}).

Finally, we examine the effect of dataset z-score normalization on erasures.

\section{Results}

\paragraph{Linear Erasure (LEACE)}

Applying LEACE to datasets consistently increases MDL and final loss across all models and tasks. This effect is roughly equal between architectures, and varies little across model widths and depths.

\paragraph{QLEACE}

QLEACE generally produces larger MDL and loss increases than linear erasure in the feedforward networks. In state-of-the-art Swin and ConvNeXt architectures, however, the impact is mixed. QLEACE slows learning more than any other erasure methods in early epochs, with early loss surpassing models trained on linearly erased datasets (Figures~\ref{fig:convnext_loss} and ~\ref{fig:swin_loss}). However, after approximately 16 epochs a sharp transition occurs: loss plummets rapidly, allowing these models to ultimately achieve lower final loss than models trained on unedited data.

We call this phenomenon ``backfiring.'' For Swin architectures, the prolonged high-loss phase during early training ultimately produces higher prequential MDL than unedited baselines – closely matching the MDL of LEACE-edited data. ConvNeXt models, by contrast, achieve lower overall MDL with QLEACE than with any other erasure method, even though the backfiring occurs late in training.

Additionally, backfiring occurs in all two- and three-layer MLPs on the CIFARNet dataset, suggesting that even simple networks are able to access the injected higher order information in some datasets.

While QLEACE produces similar increases in MDL across feedforward architectures, ReLU-based MLPs tend to be more affected at greater depths, and GeLU-based MLPs and SwiGLUs in shallow settings (Figure~\ref{fig:cifar10_ffn_mdl_quadratic}).

\paragraph{Gradient-Based Quadratic Erasure}

The gradient-based approach to quadratic erasure increases MDL across all architectures without producing backfiring. However, its efficacy relative to QLEACE varies, notably yielding substantially reduced MDL in feedforward networks - contexts where quadratic erasure does not produce backfiring. This disparity suggests that partial quadratic information may remain accessible even after aggressive gradient-based editing. These findings highlight potential limitations in gradient-based methods.

\paragraph{Approximate Quadratic Erasure (ALF-QLEACE)}

Datasets modified by ALF-QLEACE show a small increase in prequential MDL across all architectures. Surprisingly, the effect relative to LEACE is smaller in most cases, despite ALF-QLEACE applying LEACE as an initial step. This indicates that, by ablating directions along which the classes have very different variances, ALF-QLEACE is actually making the data \emph{easier} to learn.

We speculated that the smaller effect of ALF-QLEACE on MDL is caused by a data augmentation effect, where its partial normalization of the data covariance matrix by removing the directions of highest variance improves learning. We examined whether z-score normalizing the CIFAR-10 dataset before erasure affected the relative increase in MDL, with ambiguous results (Table \ref{tab:erasure_comparison}).

\section{Conclusion}

Due to the backfiring phenomena we observed in all of our quadratic concept erasure methods, we urge practitioners to exercise caution when applying them in practice. By contrast, our results suggest that LEACE is a reliable method for making features less salient and more difficult to learn.


\section*{Contributions and acknowledgements}

Nora derived QLEACE and ALF-QLEACE, and wrote the mathematical parts of the paper. Lucia ran all of the experiments and analyzed the results.

We are grateful to Coreweave for providing computing resources for this project. Nora and Lucia are funded by \href{https://www.openphilanthropy.org/grants/eleuther-ai-interpretability-research/}{a grant} from Open Philanthropy.

\section*{Impact Statement}

As discussed in \citet{belrose2023leace}, concept erasure methods can be used to enhance the fairness of deep learning models by making sensitive information, like gender or race, less salient. On the other hand, methods closely related to concept erasure can also be used to bypass safety training in large language models \citep{marshall2024refusal}. Overall, we judge that the ability to control what types of information are used by deep neural networks is beneficial for their interpretability and steerability.


\bibliography{citations}
\bibliographystyle{icml2025}

\newpage
\appendix
\onecolumn
\section{Concept erasure proofs}

\subsection{Sufficiency}
\label{sufficiency-proof}

\sufficiency*

\begin{proof}
Let $\eta$ be an arbitrary polynomial predictor of degree $N$. We can derive a lower bound on the expected loss of $\eta$ evaluated on $\rv X$ as follows.
\begin{align*}
    \E_{(\xx, \zz)} \Big[\loss(\eta(\xx), \zz)\Big]
    = \mathbb E_{\zz} & \Big[ \E_{\xx} \Big[\loss(\eta(\xx), \zz) \big| \zz \Big] \Big] \\
    \ge \mathbb E_{\zz} & \Big[ \loss\Big( \E_{\xx} \big[ \eta(\xx) \big| \zz \big], \zz \Big) \Big] \tag{Jensen's inequality} \\
    = \mathbb E_{\zz} & \Big[ \loss\Big( \mathbf b + \sum_{n = 1}^{N} \mathbf A^{(n)}_{i_1 \ldots i_n} \E \big[ \rv X_{i_1}\ldots \rv X_{i_n} \big| \zz \big], \zz \Big) \Big] \tag{Def.~\ref{def:polynomial-predictor} and linearity} \\
    = \mathbb E_{\zz} &\Big[ \loss\Big( \mathbf b + \sum_{n = 1}^{N} \mathbf A^{(n)}_{i_1 \ldots i_n} \E \big[ \rv X_{i_1}\ldots \rv X_{i_n} \big], \zz \Big) \Big] \tag{by assumption} \\
    = \mathbb E_{\zz} &\Big[ \loss\Big( \eta'(\xx), \zz \Big) \Big] \tag{definition of $\eta'$}.
\end{align*}

The trivially attainable loss is the lowest loss achievable by a constant function. Since $\eta'$ is a constant function and $\eta$ was arbitrary, no $\eta$ can improve upon the trivially attainable loss.
\end{proof}

\subsection{Necessity}
\label{necessity-proof}

\necessity*

\begin{proof}
    Fix some class $j \in 1 \ldots k$. The first-order optimality condition on $\mathbf b_j$ and $\mathbf A^{(n)}_{\ldots j}$ yields\footnote{We are using NumPy-style indexing here, so that $\mathbf A^{(n)}_{\ldots j}$ refers to the slice of $\mathbf A^{(n)}$ where the final axis index is set to $j$. This yields a ``square'' order-$n$ tensor whose axes are all of size $d$.}
    \begin{equation}\label{opt}
        \E_{(\xx, \zz)} \Bigg[ \frac {\partial \loss(\eta, \zz)} {\partial \eta_j} \cdot \frac {\partial \eta_j} {\partial \mathbf b_j} \Bigg] = 0 \quad \text{and} \quad \forall n \in 1 \ldots N : \E_{(\xx, \zz)} \Bigg[ \frac {\partial \loss(\eta, \zz)} {\partial \eta_j} \cdot \frac {\partial \eta_j} {\partial \mathbf A^{(n)}_{\ldots j}} \Bigg] = \mathbf{0}.
    \end{equation}

    Since $\eta$ is constant over all values of $\rv X$, and $\frac {\partial \eta_j} {\partial \mathbf b_j} = 1$, the first equation in (\ref{opt}) reduces to:    
    \begin{equation} \label{optBreduced}
        \mathbb P(\rv Z = j) \frac {\partial \loss(\eta, j)} {\partial \eta_j} +
            \mathbb P(\rv Z \neq j) \frac {\partial \loss(\eta, \neq j)} {\partial \eta_j} = 0,
    \end{equation}
    where $\frac {\partial \loss(\eta, \neq i)} {\partial \eta_i}$ is an abuse of notation denoting the off-category partial derivative, emphasizing its independence of the category $\rv Z$. For each $n$, the constancy of $\eta$ and the fact that $\frac {\partial \eta_i} {\partial \mathbf A^{(n)}_{\ldots j}} = \xx_{i_1} \ldots \xx_{i_n}$ reduces the second part of (\ref{opt}) to:
    \begin{equation} \label{optWreduced}
        \mathbb P(\rv Z = j) \frac {\partial \loss(\eta, j)} {\partial \eta_j} \cdot \E \big[ \rv X_{i_1}\ldots \rv X_{i_n} \big| \rv Z = j \big] + \\
        \mathbb P(\rv Z \neq j) \frac {\partial \loss(\eta, \neq j)} {\partial \eta_j} \cdot \E \big[ \rv X_{i_1}\ldots \rv X_{i_n} \big| \rv Z \neq j \big]
        = \mathbf{0}.
    \end{equation}
    Solving for $\mathbb P(\rv Z = j) \frac {\partial \loss(\eta, j)} {\partial \eta_j}$ in (\ref{optBreduced}) and substituting in (\ref{optWreduced}) gives us:
    \begin{equation}\label{eq:moment-constraint}
        \forall n \in 1 \ldots N : \mathbb P(\rv Z \neq j) \frac {\partial \loss(\eta, \neq j)} {\partial \eta_j} \cdot \Bigg(
                    \E \big[ \rv X_{i_1}\ldots \rv X_{i_n} \big| \rv Z \neq j \big] -
                    \E \big[ \rv X_{i_1}\ldots \rv X_{i_n} \big| \rv Z = j \big]
        \Bigg) = \mathbf{0}.
    \end{equation}
    If $\mathbb P(\rv Z \neq j) = 0$, then all class-conditional moments are \emph{trivially} equal to the unconditional moments. Otherwise, we can divide both sides by $\mathbb P(\rv Z \neq j)$ and by $\frac {\partial \loss(\eta, \neq j)} {\partial \eta_j}$ (using the non-vanishingness of the off-category partial derivative), yielding for each order $n \in 1 \ldots N$:  
    \begin{align*}
        \E \big[ \rv X_{i_1}\ldots \rv X_{i_n} \big| \rv Z = j \big] = \E \big[ \rv X_{i_1}\ldots \rv X_{i_n} \big| \rv Z \neq j \big] = \E \big[ \rv X_{i_1}\ldots \rv X_{i_n} \big],
    \end{align*}
    thereby completing the proof.
\end{proof}

\section{Model scaling details} \label{appendix:scaling}

ConvNeXt and Swin models are produced using width and depth parameters, which represent base network parameters and are scaled in later stages according to a conventional structure.

In the ConvNeXt models, the base depth $D$ is scaled in the third stage to produce the depths $[D,D,3D,D]$, and width doubles in each successive stage, following standard practice for progressive feature map expansion. The Swin models follow the same general scaling structure, with the number of heads also scaling in the third stage according to $[D,D,3D]$.

Our ConvNeXt and Swin implementations use a reduced patch size of 1 and 2 respectively to accommodate the datasets' small image sizes.

\newpage
\section{Detailed results}

\begin{figure}[h]
    \centering
    \includegraphics[width=\textwidth]{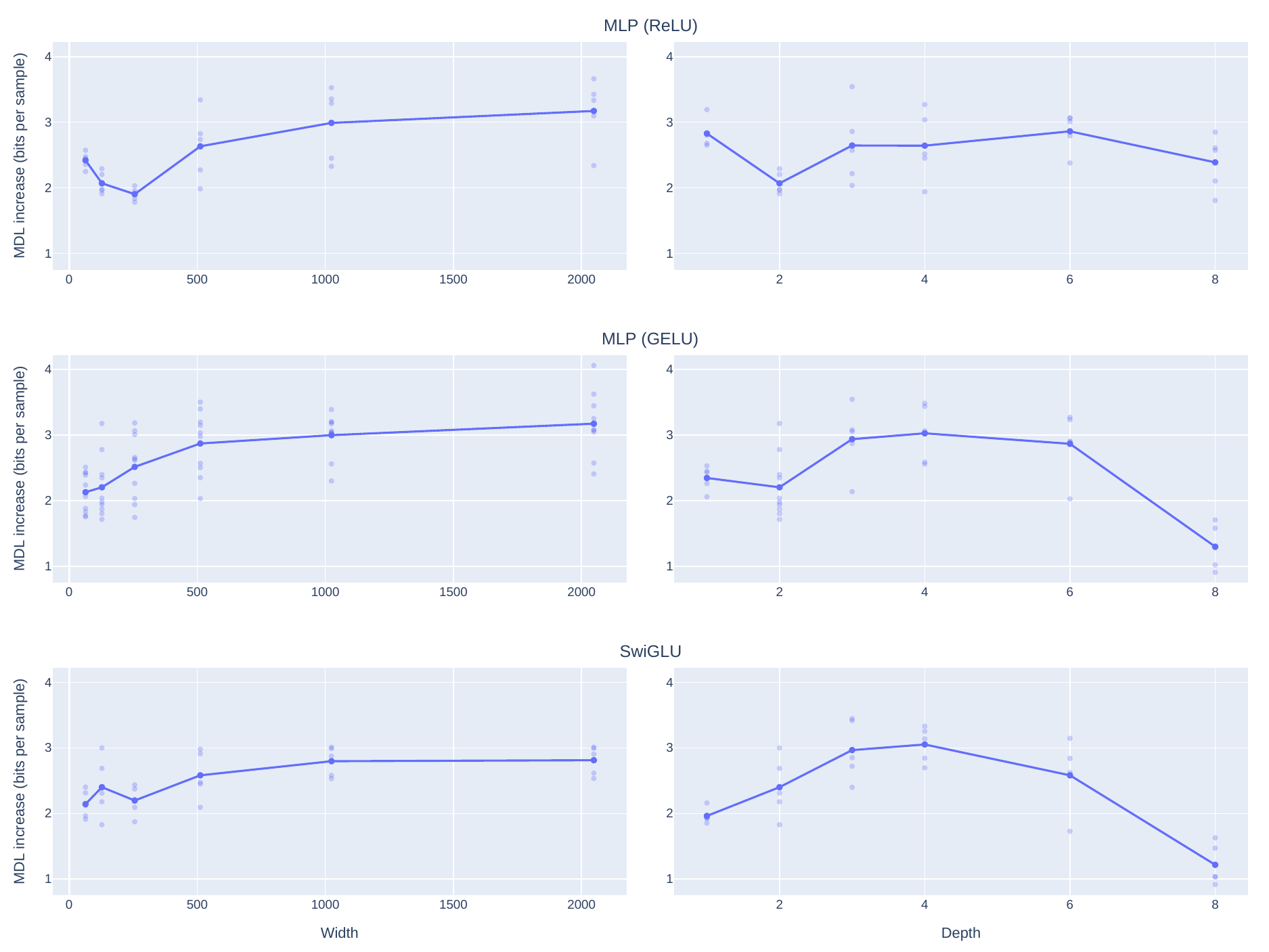}
    \caption{MDL over 5 random seeds for feedforward networks of various lengths and widths on linearly erased CIFAR-10. All models have a constant depth of 2 when width is varying and a constant width of 128 when depth is varying.}
    \label{fig:cifar10_ffn_mdl_linear}
\end{figure}

\begin{figure}[h]
    \centering
    \includegraphics[width=\textwidth]{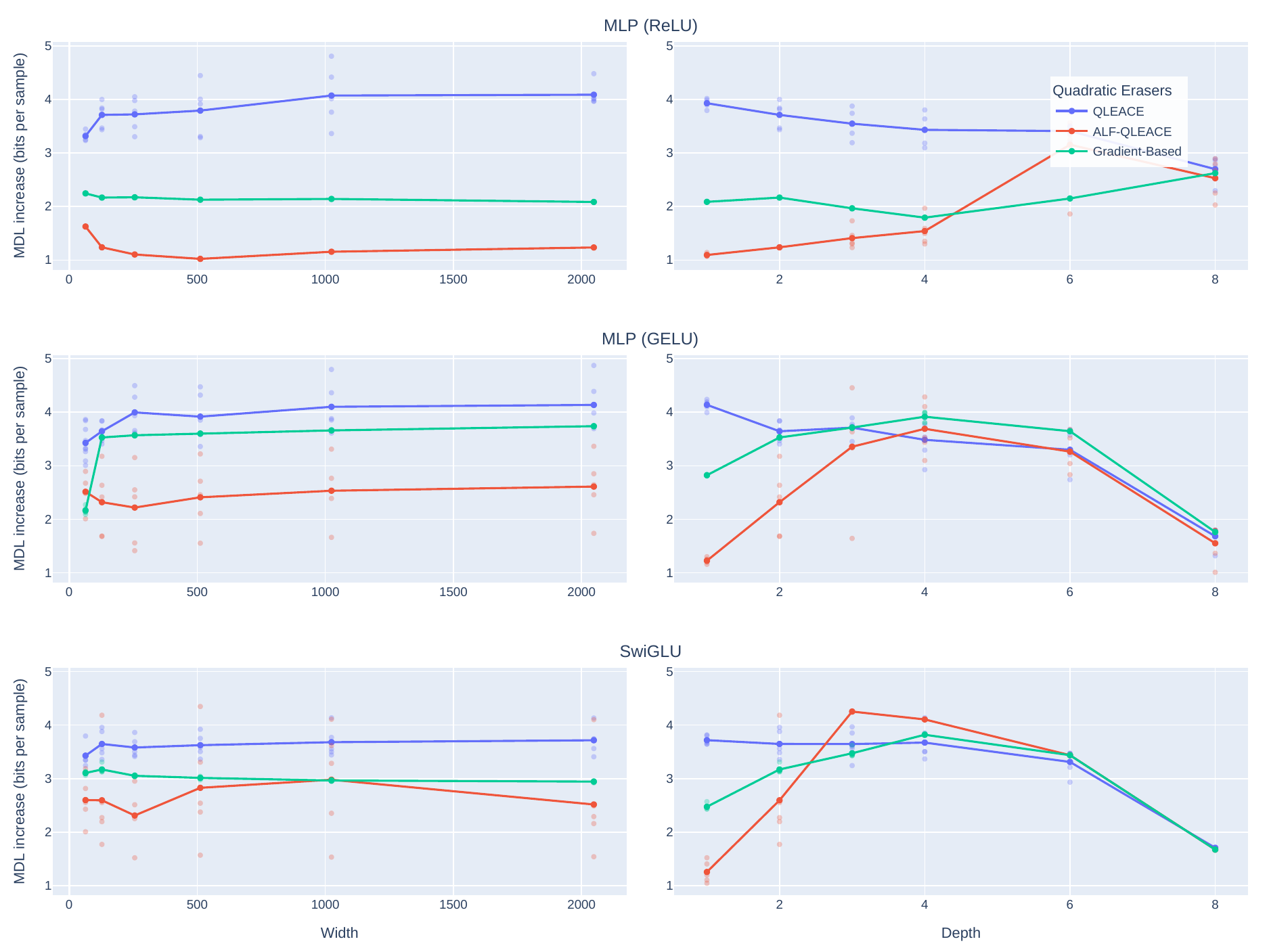}
    \caption{MDL over 5 random seeds for feedforward networks of various lengths and widths on the quadratically erased CIFAR-10 dataset. All models have a constant depth of 2 when width is varying and a constant width of 128 when depth is varying.}
    \label{fig:cifar10_ffn_mdl_quadratic}
\end{figure}

\begin{figure}[h]
    \centering
    \includegraphics[width=\textwidth]{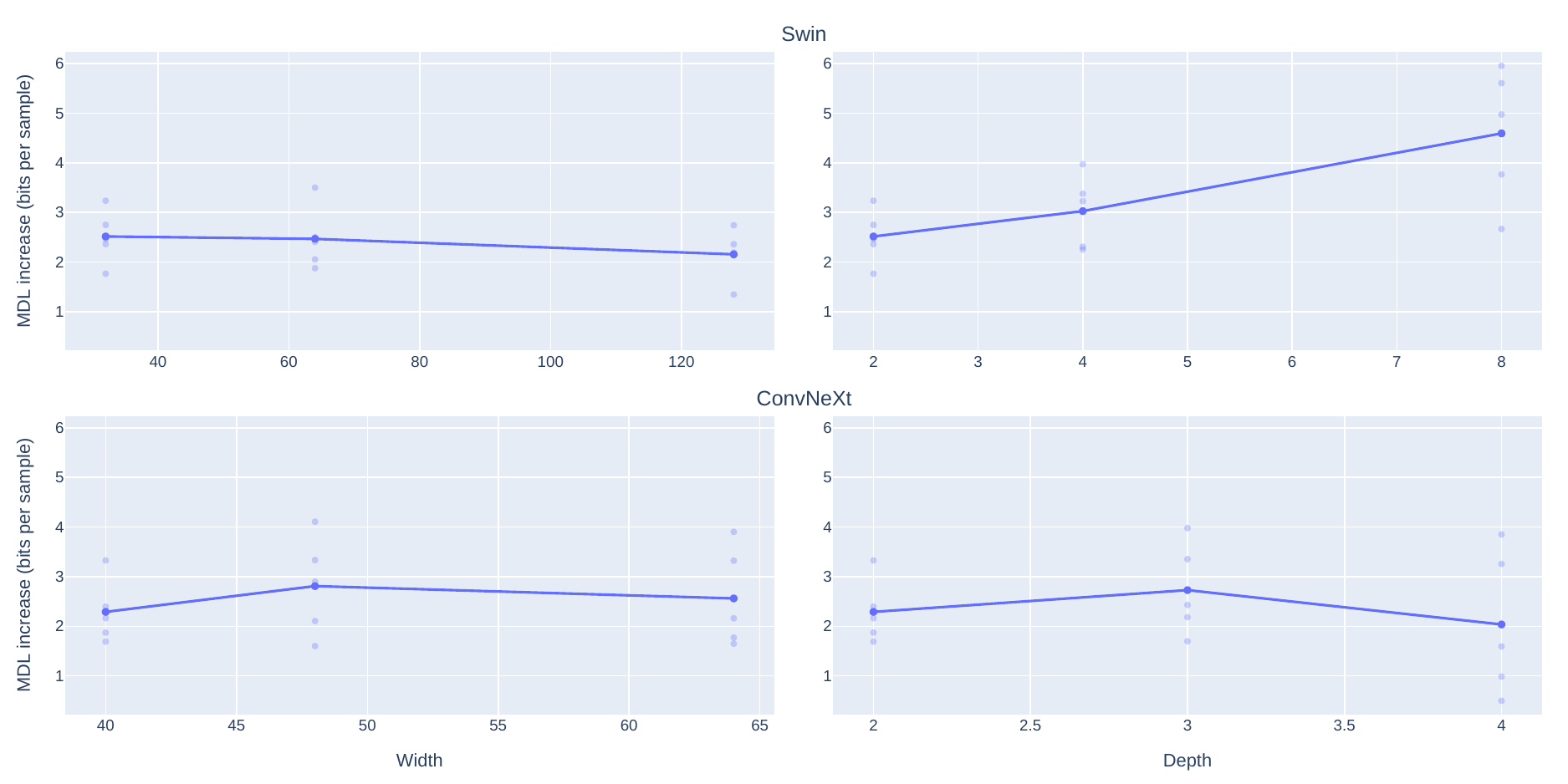}
    \caption{MDL over 5 random seeds for Swin Transformers and ConvNeXts of various lengths and widths on the linearly erased CIFAR-10 dataset.}
    \label{fig:cifar10_sota_mdl_linear}
\end{figure}

\begin{figure}[h]
    \centering
    \includegraphics[width=\textwidth]{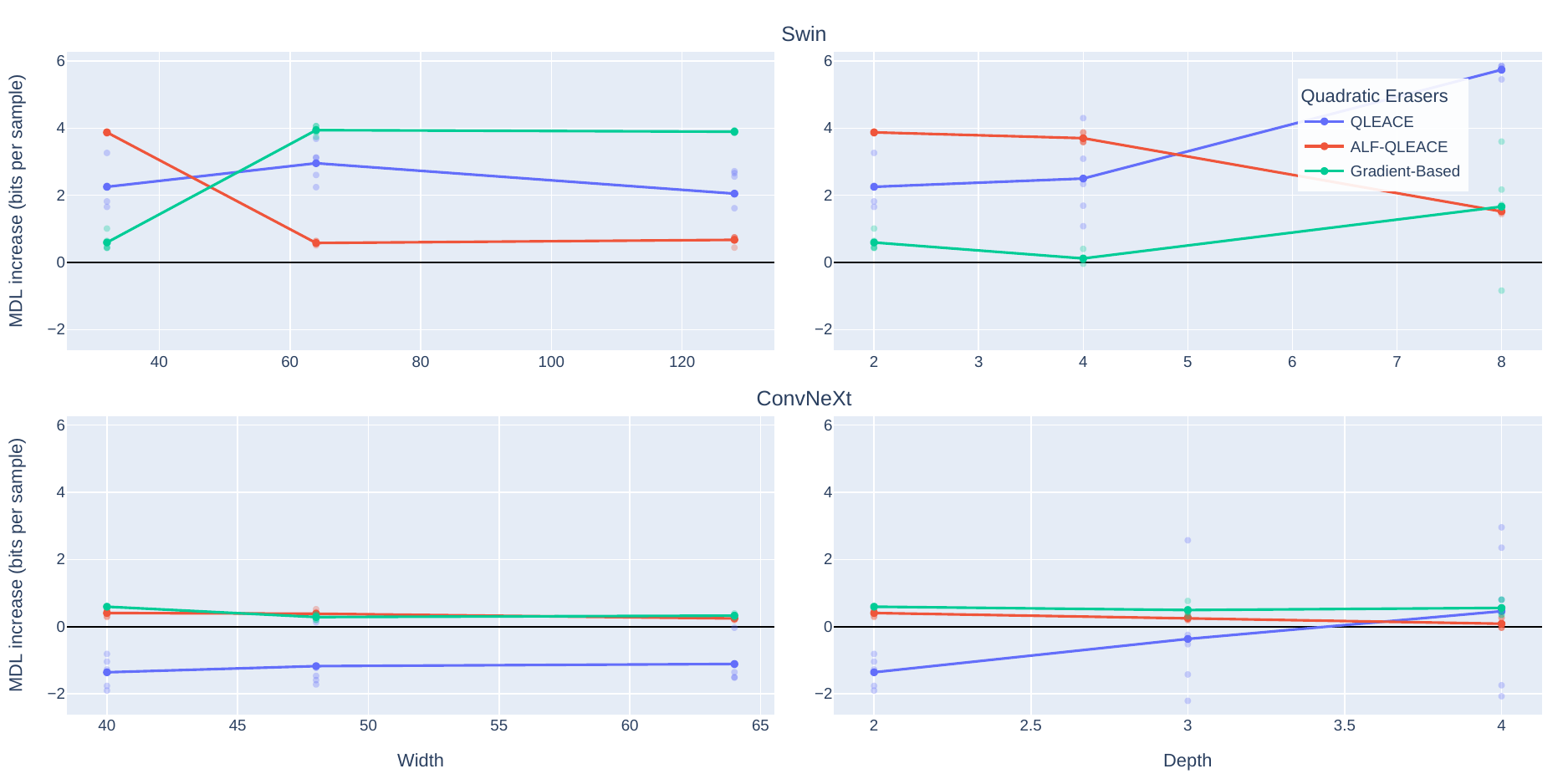}
    \caption{MDL over 5 random seeds for Swin Transformers and ConvNeXts of various lengths and widths on the quadratically erased CIFAR-10 dataset. The ConvNeXts exhibit backfiring, achieving lower mean MDLs on surgically quadratically erased data than on unerased data.}
    \label{fig:cifar10_sota_mdl_quad}
\end{figure}

\begin{figure}[h]
    \centering
    \includegraphics[width=\textwidth]{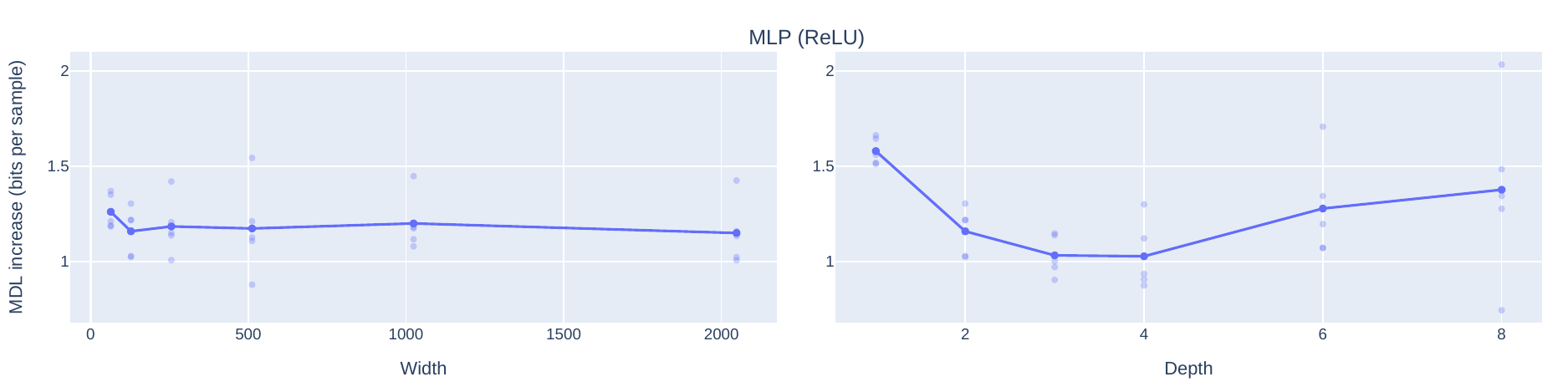}
    \includegraphics[width=\textwidth]{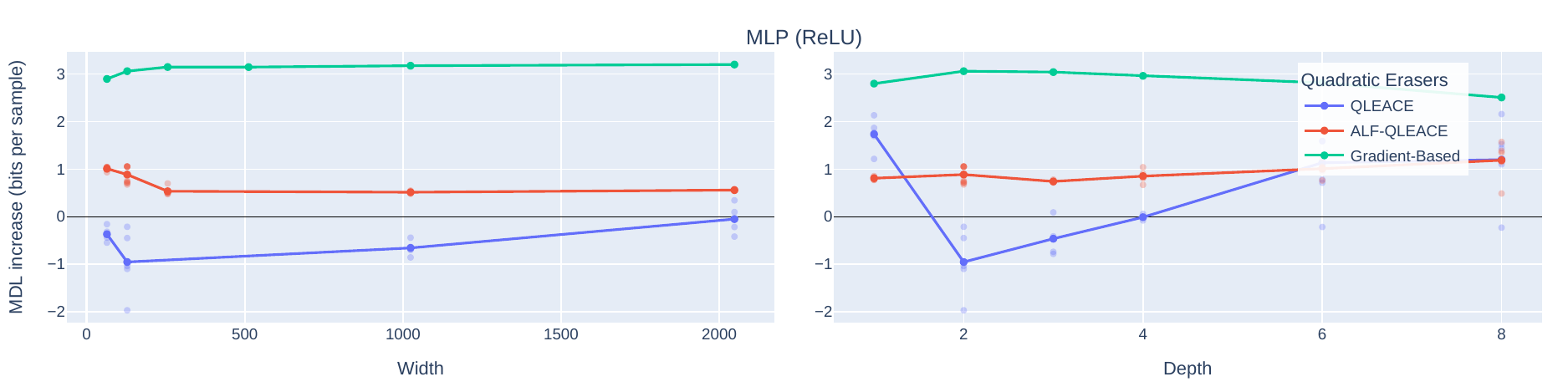}
    \includegraphics[width=\textwidth]{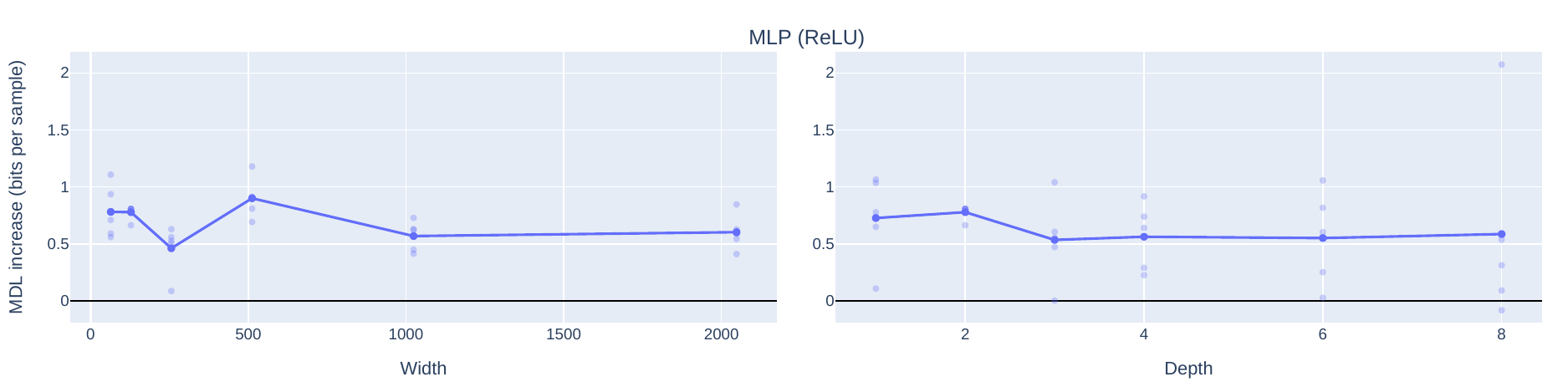}
    \includegraphics[width=\textwidth]{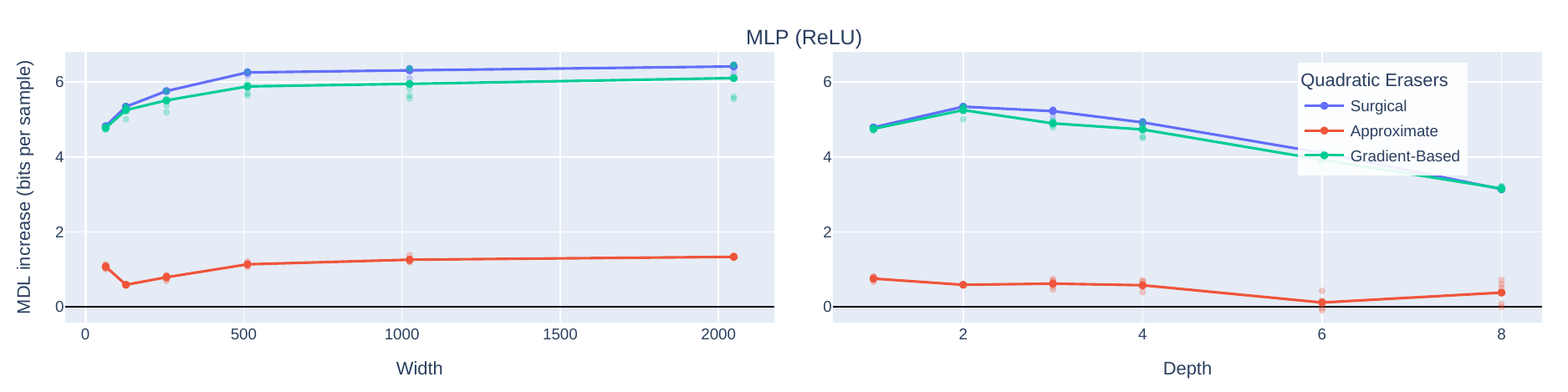}
    \caption{MDL over 5 random seeds for MLPs of various lengths and widths on the linearly and quadratically erased CIFARNet and SVHN datasets. The MLPs exhibit backfiring on the CIFARNet dataset for some model dimensions.}
    \label{fig:cifar10_sota_mdl_quad}
\end{figure}

\begin{figure}[h]
    \centering
    \includegraphics[width=\textwidth, trim=0 0 0 50pt, clip]{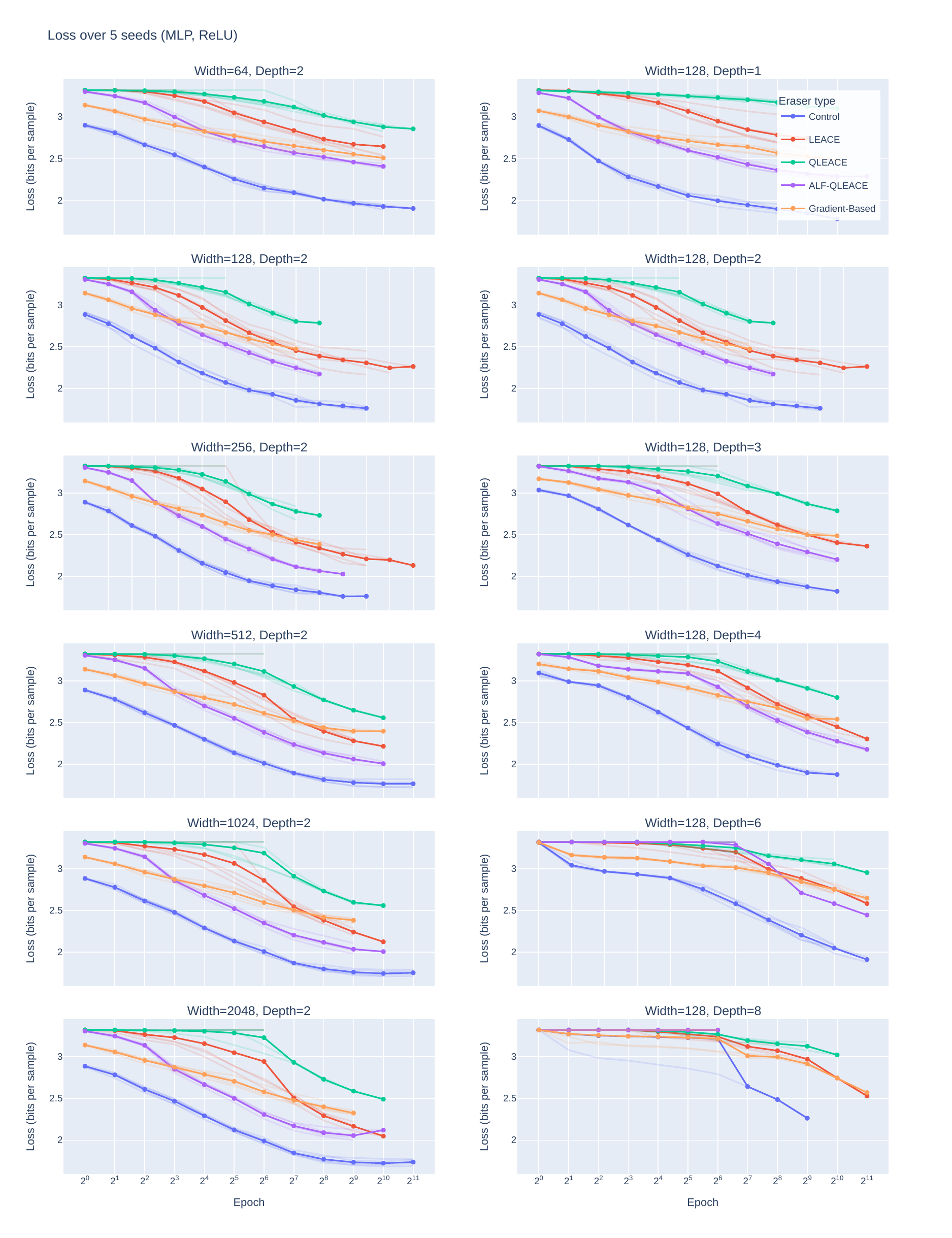}
    \caption{Cross-entropy loss over 5 random seeds for ReLU MLPs of various lengths and widths on the CIFAR-10 dataset.}
    \label{fig:mlp_relu_cifar10}
\end{figure}

\begin{figure}[h]
    \centering
    \includegraphics[width=\textwidth, trim=0 0 0 50pt, clip]{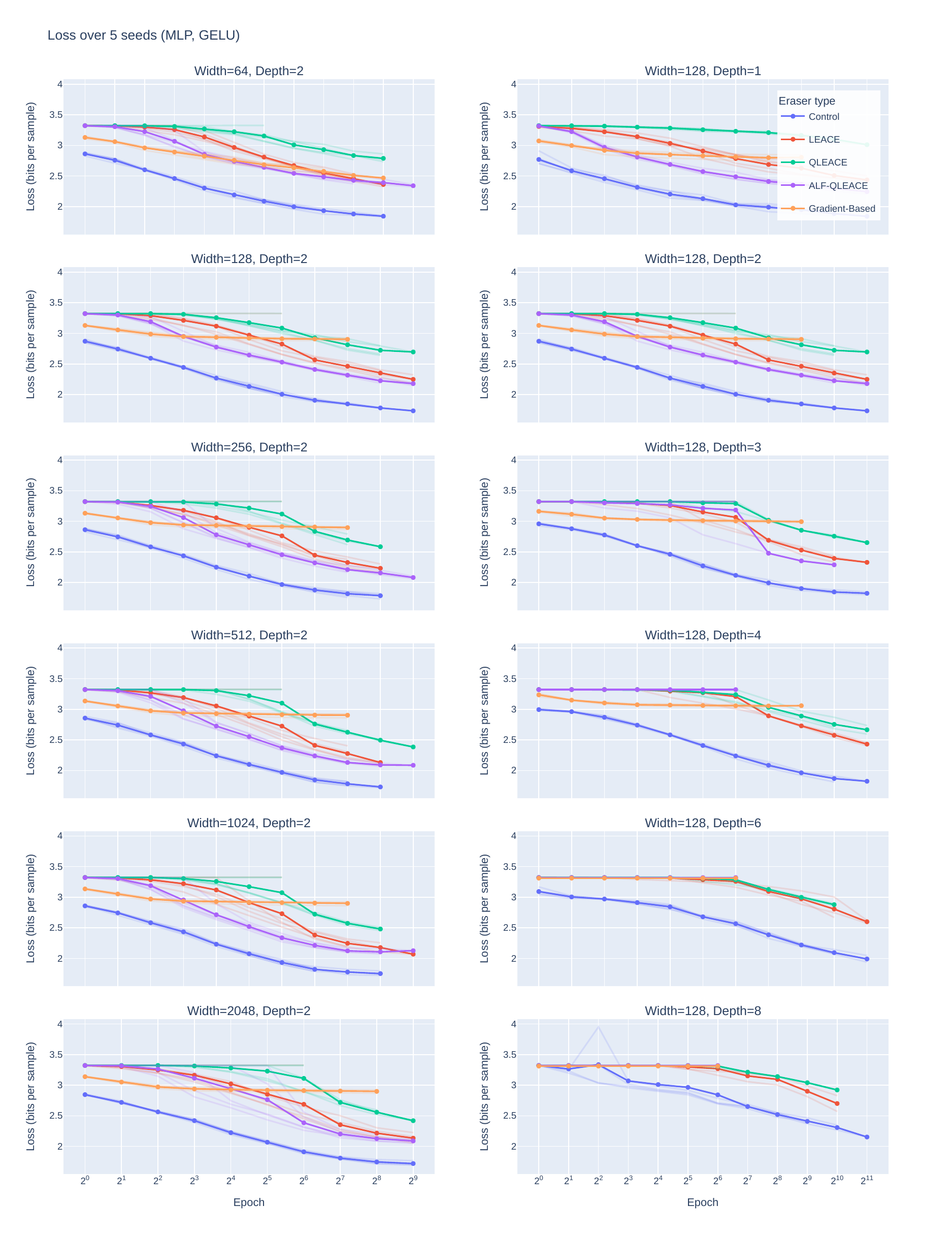}
    \caption{Cross-entropy loss over 5 random seeds for GELU MLPs of various lengths and widths on the CIFAR-10 dataset. MLPs with a varying depth have a constant width of 128 and MLPs with a varying width have a constant depth of 2.}
    \label{fig:mlp_gelu}
\end{figure}

\begin{figure}[h]
    \centering
    \includegraphics[width=\textwidth, trim=0 0 0 50pt, clip]{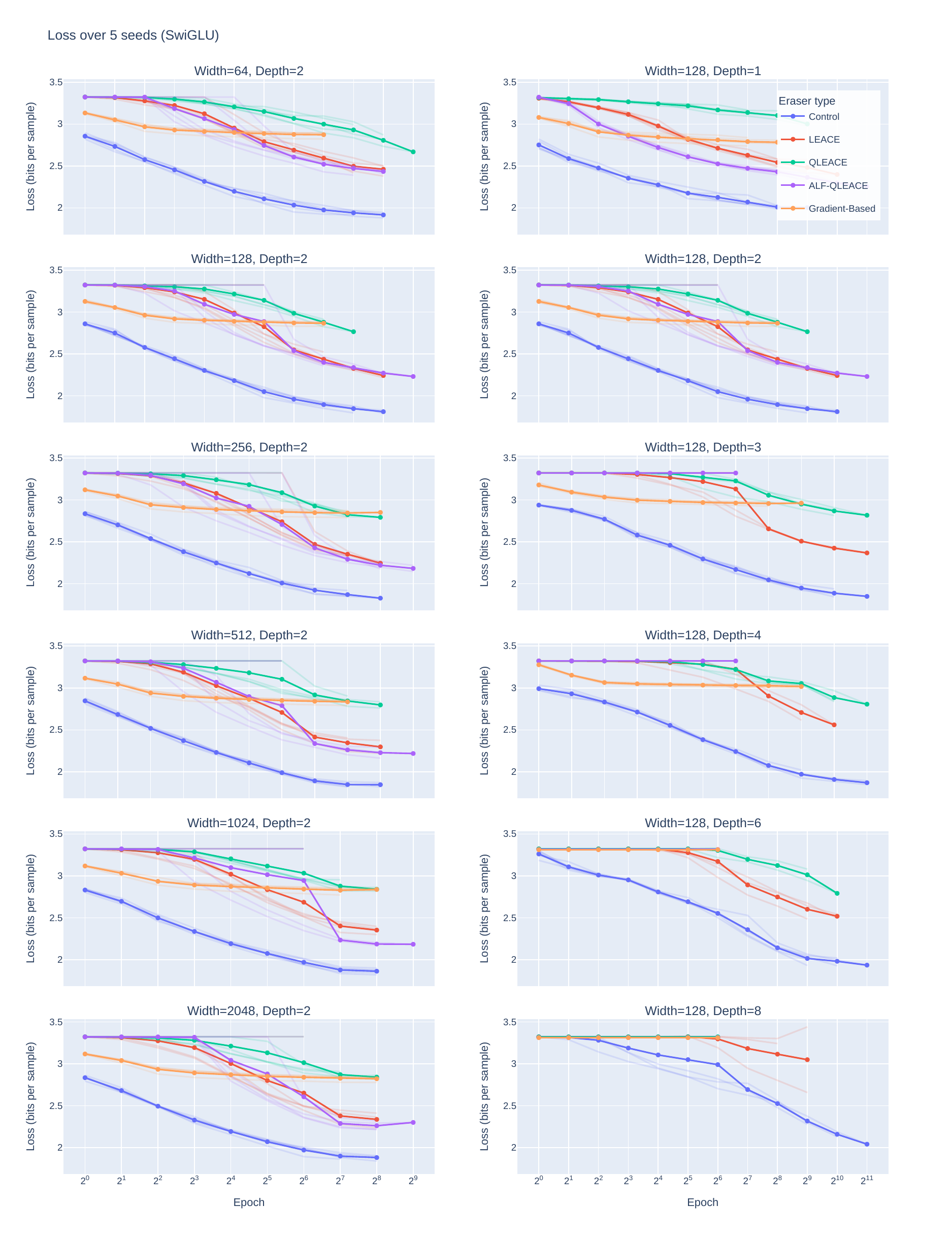}
    \caption{Cross-entropy loss over 5 random seeds for SwiGLUs of various lengths and widths on the CIFAR-10 dataset. GLUs with a varying depth have a constant width of 128 and GLUs with a varying width have a constant depth of 2.}
    \label{fig:mlp_swiglu}
\end{figure}

\begin{figure}[h]
    \centering
    \includegraphics[width=\textwidth, trim=0 0 0 50pt, clip]{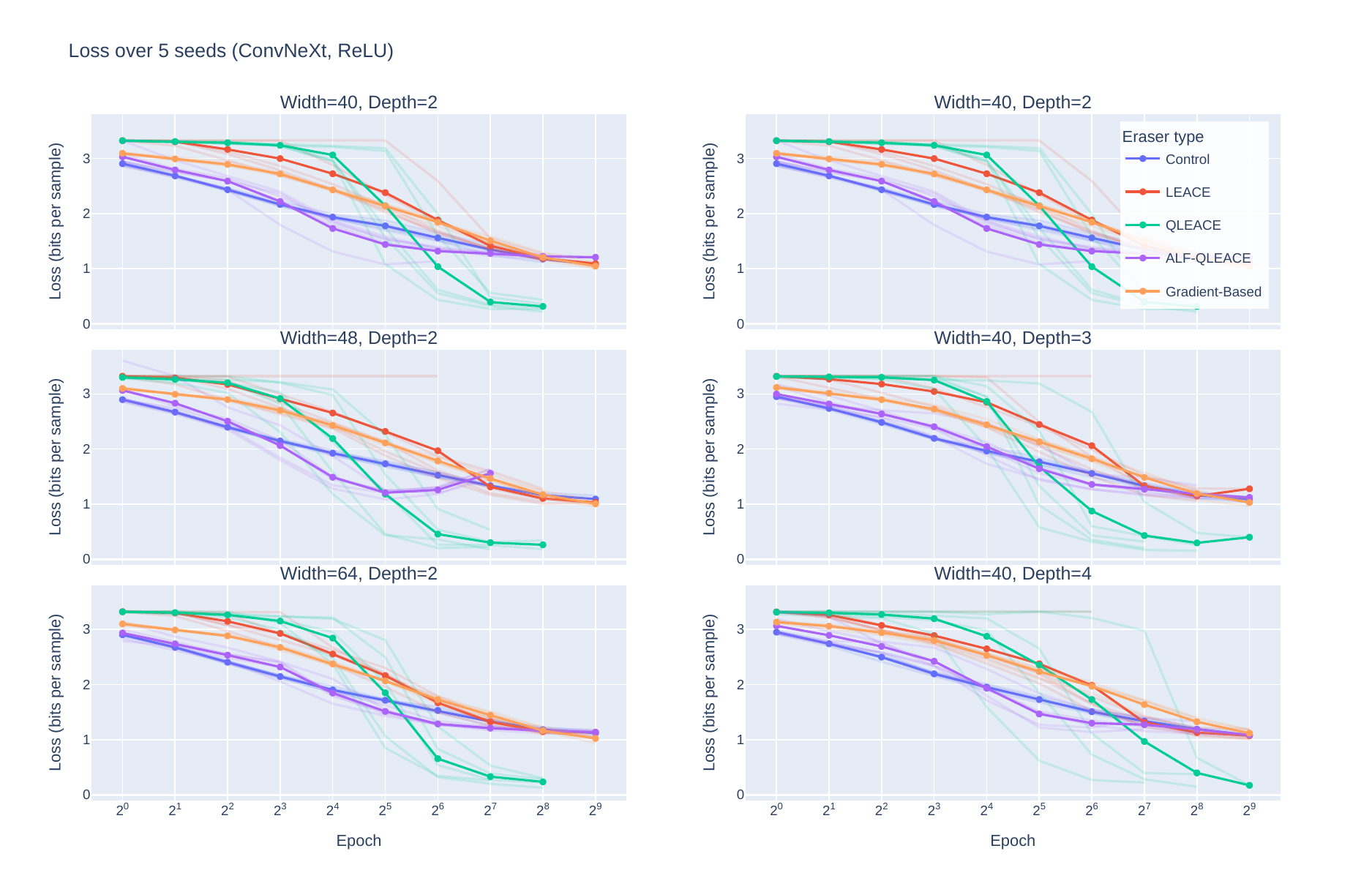}
    \caption{Cross-entropy loss over 5 random seeds for ConvNeXtV2s of various lengths and widths on the CIFAR-10 dataset.}
    \label{fig:convnext_loss}
\end{figure}

\begin{figure}[h]
    \centering
    \includegraphics[width=\textwidth, trim=0 0 0 50pt, clip]{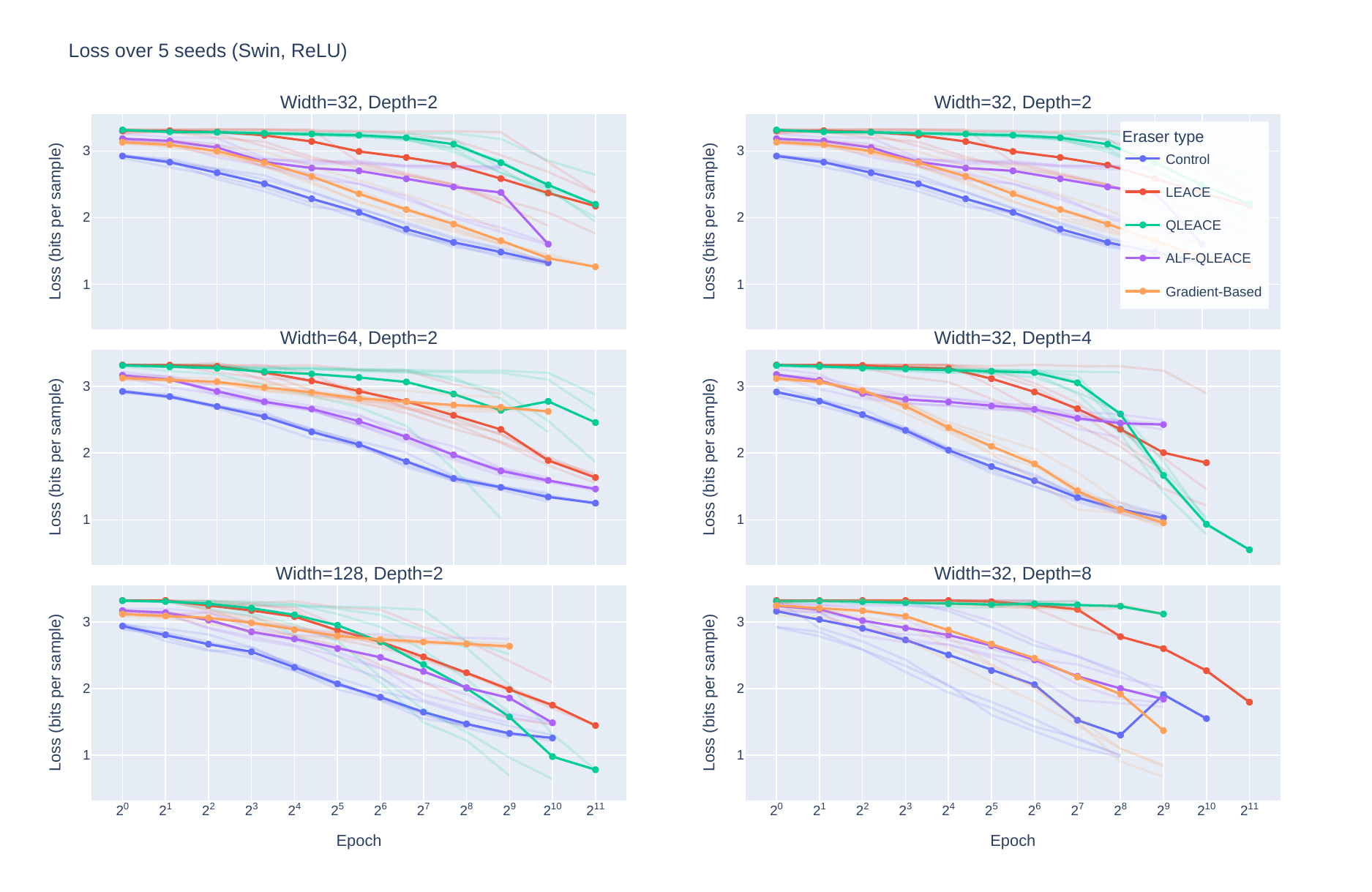}
    \caption{Cross-entropy loss over five random seeds for Swin transformers of various lengths and widths on the CIFAR-10 dataset.}
    \label{fig:swin_loss}
\end{figure}

\begin{figure}[h]
    \centering
    \includegraphics[width=\textwidth, trim=0 0 0 50pt, clip]{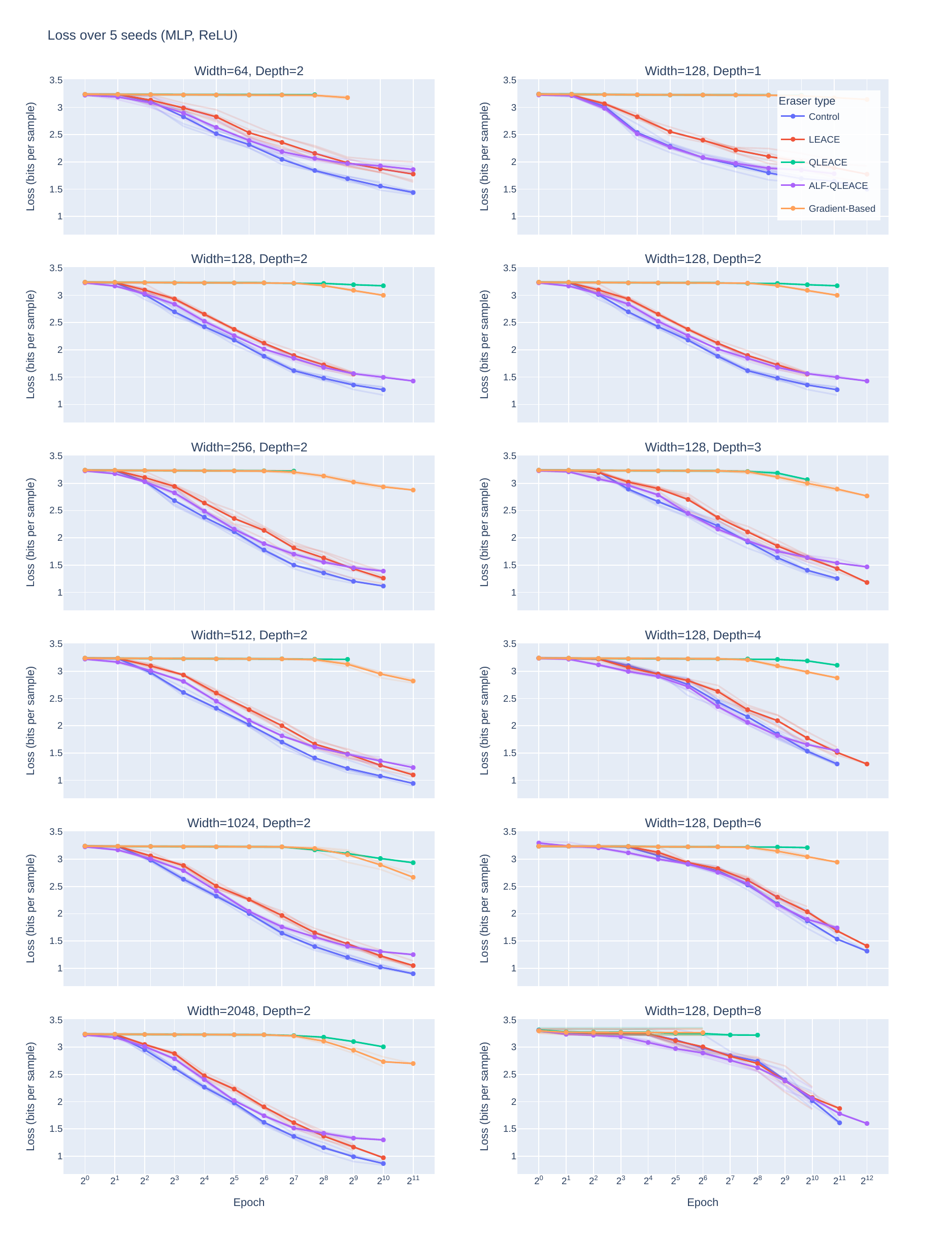}
    \caption{Cross-entropy loss over 5 random seeds for ReLU MLPs of various lengths and widths on the SVHN dataset.}
    \label{fig:mlp_relu_svhn}
\end{figure}

\begin{figure}[h]
    \centering
    \includegraphics[width=\textwidth, trim=0 0 0 50pt, clip]{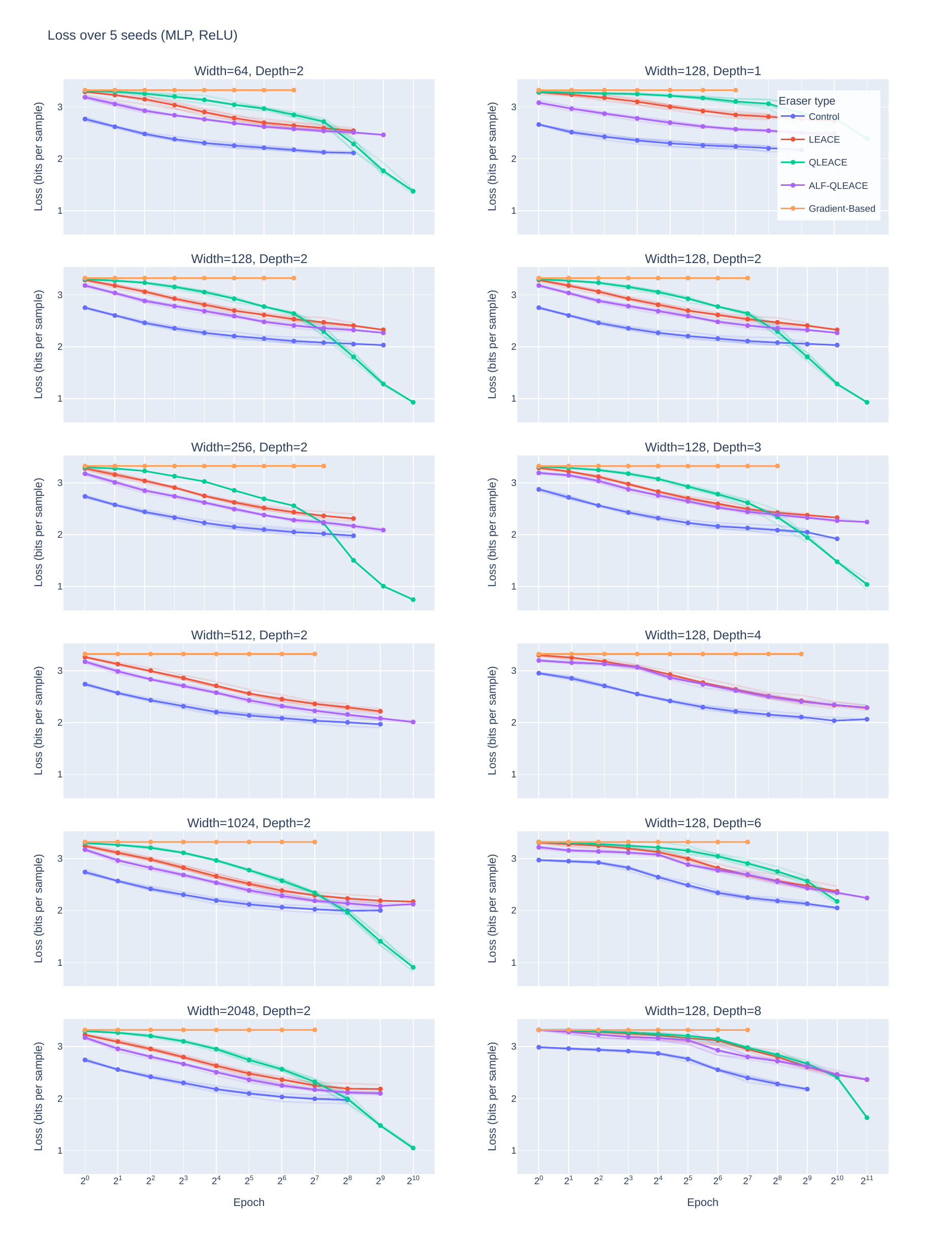}
    \caption{Cross-entropy loss over 5 random seeds for ReLU MLPs of various lengths and widths on the CIFARNet dataset.}
    \label{fig:mlp_relu_cifarnet}
\end{figure}

\begin{figure}[h]
    \includegraphics[trim=0 0 0 0, clip, width=\columnwidth]{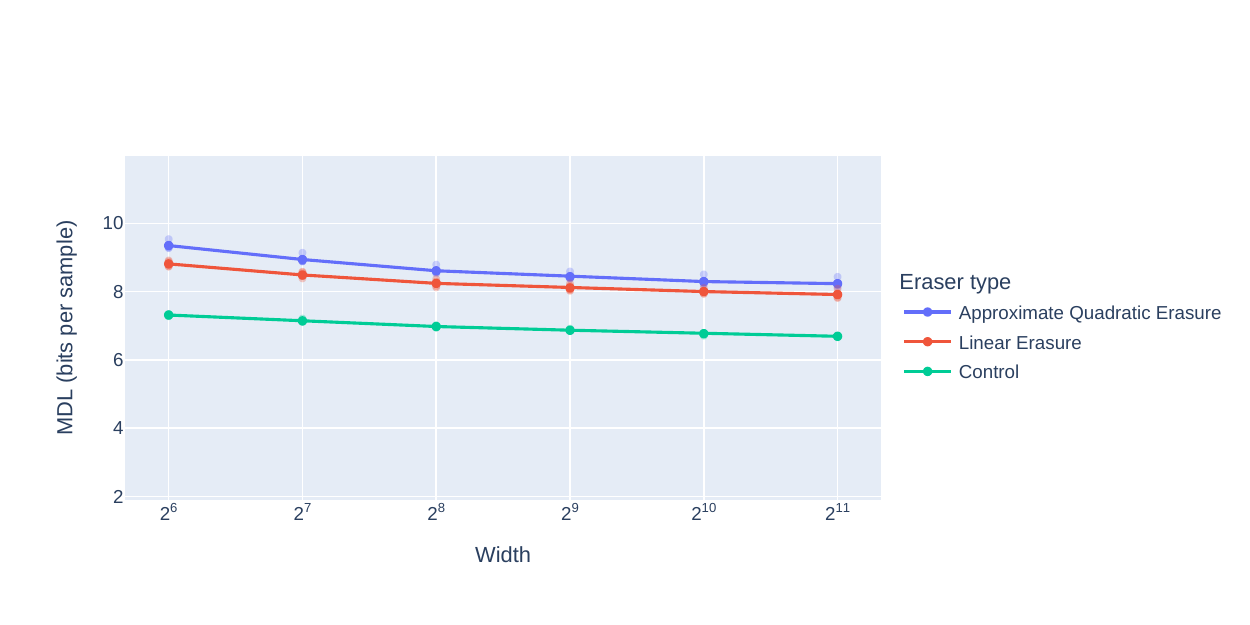}
    \vspace{1em}
    \centering
    \begin{tabular}{l|c|c|c|c|c}
    \hline
    Architecture & Control & LEACE & ALF-QLEACE & Gradient Quadratic & QLEACE \\
    \hline
    MLP & 7.14 ± 0.05 & 8.49 ± 0.10 & 8.94 ± 0.14 & 10.02 ± 0.08 & \textbf{11.41} ± 0.00 \\
    LeNet & 5.62 ± 0.02 & 6.59 ± 0.28 & 6.83 ± 0.11 & \textbf{8.71} ± 0.09 & 8.68 ± 0.24 \\
    ConvNeXtV2 & 5.09 ± 0.09 & 5.20 ± 0.15 & 4.92 ± 0.05 & \textbf{5.37} ± 0.02 & 1.67 ± 0.20 \\
    Swin V2 & 8.18 ± 0.03 & 9.00 ± 0.13 & 8.67 ± 1.20 & \textbf{9.29} ± 0.06 & 9.08 ± 0.13 \\
    \hline
    \end{tabular}
    \caption{(Top) ReLU MLP MDL results on the CIFAR-10 dataset after applying z-score normalization. Normalization appreciably reduces MDL on linearly erased data, resulting in a lower mean MDL than for approximately quadratically erased data. (Bottom) Increases in MDL across architectures from erasing CIFAR-10 images where each (pixel, channel) coordinate has been normalized to zero-mean and unit-variance. All measurements are taken on base model sizes.}
    \label{tab:erasure_comparison}
\end{figure}


\end{document}